\def\BibTeX{{\rm B\kern-.05em{\sc i\kern-.025em b}\kern-.08em
    T\kern-.1667em\lower.7ex\hbox{E}\kern-.125emX}}
\newtheorem{theorem}{Theorem}
\newtheorem{lemma}{Lemma}
\Crefname{section}{Section}{Sections}
\Crefname{table}{Table}{Tables}
\Crefname{figure}{Figure}{Figures}
\Crefname{equation}{Equation}{Equations}
\definecolor{codegreen}{rgb}{0,0.6,0}
\definecolor{codegray}{rgb}{0.5,0.5,0.5}
\definecolor{codepurple}{rgb}{0.58,0,0.82}
\definecolor{backcolour}{rgb}{1.0,1.0,1.0}
\lstdefinestyle{mystyle}{
    backgroundcolor=\color{backcolour},
    commentstyle=\color{codegreen},
    keywordstyle=\color{magenta},
    numberstyle=\tiny\color{codegray},
    stringstyle=\color{codepurple},
    basicstyle=\ttfamily\scriptsize,
    breakatwhitespace=false,
    breaklines=true,
    captionpos=b,
    keepspaces=true,
    numbers=left,
    numbersep=5pt,
    showspaces=false,
    showstringspaces=false,
    showtabs=false,
    tabsize=2
}
\newcommand{\balpha}[1]{\bar{\alpha}_{#1}}
\newcommand{\revise}[1]{\textcolor{black}{#1}}
\newcommand{\newrevise}[1]{\textcolor{black}{#1}}
\newcommand{\method}{DMT\xspace}
\newcommand{\supp}{\textit{Supplementary Material}\xspace}
\newcommand{\sqq}[1]{\textcolor{black}{#1}}
\newcommand{\yr}[1]{\textcolor{black}{#1}}
\begin{document}

\title{A Diffusion Model Translator for Efficient Image-to-Image Translation}

\author{
Mengfei Xia, Yu Zhou, Ran Yi, Yong-Jin Liu,~\IEEEmembership{Senior Member,~IEEE}, Wenping Wang,~\IEEEmembership{Fellow,~IEEE}
\IEEEcompsocitemizethanks{\IEEEcompsocthanksitem  Mengfei Xia and Yu Zhou are with the MOE-Key Laboratory of Pervasive Computing, Department of Computer Science and Technology, Tsinghua University, Beijing 100084, China (email: \{xmf20, yzhou20\}@mails.tsinghua.edu.cn).
\IEEEcompsocthanksitem Ran Yi is with the Department of Computer Science and Engineering, Shanghai Jiao Tong University, Shanghai, China (email: ranyi@sjtu.edu.cn).
\IEEEcompsocthanksitem Yong-Jin Liu is with the MOE-Key Laboratory of Pervasive Computing, Department of Computer Science and Technology, Tsinghua University, Beijing 100084, China (email: liuyongjin@tsinghua.edu.cn).
\IEEEcompsocthanksitem Wenping Wang is with the Department of Computer Science and Computer Engineering at Texas A\&M University, Texas, The United States (email: wenping@tamu.edu).
\IEEEcompsocthanksitem Corresponding authors: Ran Yi and Yong-Jin Liu.}
}

\markboth{IEEE TRANSACTIONS ON PATTERN ANALYSIS AND MACHINE INTELLIGENCE, VOL. 46, NO. 12, DECEMBER 2024}%
{Xia \MakeLowercase{\textit{et al.}}: A Diffusion Model Translator for Efficient Image-to-Image Translation}


\IEEEtitleabstractindextext{
\begin{abstract}

Applying diffusion models to image-to-image translation (I2I) has recently received increasing attention due to its practical applications.
Previous attempts \sqq{inject information} from the source image into each denoising step for an iterative refinement, \sqq{thus resulting in a} time-consuming implementation.
We propose an efficient method that equips a diffusion model with a lightweight translator, dubbed a Diffusion Model Translator (\method), to accomplish I2I. 
Specifically, we first \sqq{offer} theoretical justification that \sqq{in employing} the pioneer\sqq{ing} DDPM work \sqq{for} the I2\revise{I} task, it is \sqq{both} feasible and sufficient to transfer the distribution from one domain to another only at some intermediate step.
We \sqq{further} observe that the translation performance highly depends on the \sqq{chosen timestep for} domain transfer, and therefore propose a practical strategy to automatically select an appropriate timestep for a given task.
We evaluate our approach on a range of I2I applications, \sqq{including} image stylization, image colorization, segmentation to image, and sketch to image, to validate its efficacy and general utility.
\sqq{The c}omparisons show that our \method surpasses existing methods in both quality and efficiency.
Code will be made publicly available.

\end{abstract}

\begin{IEEEkeywords}
Diffusion models, image translation, deep learning, generative models.
\end{IEEEkeywords}}

\maketitle



\section{Introduction}\label{sec:intro}

\IEEEPARstart{A} diffusion probabilistic model~\cite{sohl2015deep,ho2020denoising,song2020score,song2020denoising}, also known as a diffusion model, is a generative model that consists \sqq{of} (1) a forward diffusion process that gradually adds noise to a data distribution until it becomes a simple latent distribution (\textit{e.g.}, Gaussian), and (2) a reverse process that \sqq{begins with} a random sample in the latent distribution \sqq{and employs} a learned network to revert the diffusion process, \sqq{thereby generating} a data point in the original distribution. 
Among all the variants of the diffusion model, the denoising diffusion probabilistic model (DDPM)~\cite{ho2020denoising} offers the advantage of a simple training procedure by exploring an explicit connection between the diffusion model and denoising score matching.
Recent studies have demonstrated the compelling performance of DDPM in high-fidelity image synthesis~\cite{ho2020denoising, nichol2021improved, dhariwal2021diffusion}. 

Despite its rapid development, there are relatively few studies on applying the diffusion model to conditional generation, which is \sqq{a key requirement} for many real-world applications, such as the well-known image-to-image (I2I) task~\cite{isola2017image} that translate a source image of one style into another target image of \sqq{a different} style.
Unlike unconditional generation, conditional generation \sqq{necessitates constraining} synthesized result with an input sample in the source domain as the \revise{content guidance}. 
Therefore, to handle an I2I task using DDPM, existing methods~\cite{sasaki2021unit,saharia2021palette,choi2021ilvr,liu2021more,wang2022pretraining} inject the information from an input source sample into every single denoising step in the reverse process (see \cref{fig:method}a).
In this way, each denoising step explicitly relies on its previous step, making it inefficient to learn the step-wise injection.

\begin{figure*}[t]
\centering
\includegraphics[width=1.\textwidth]{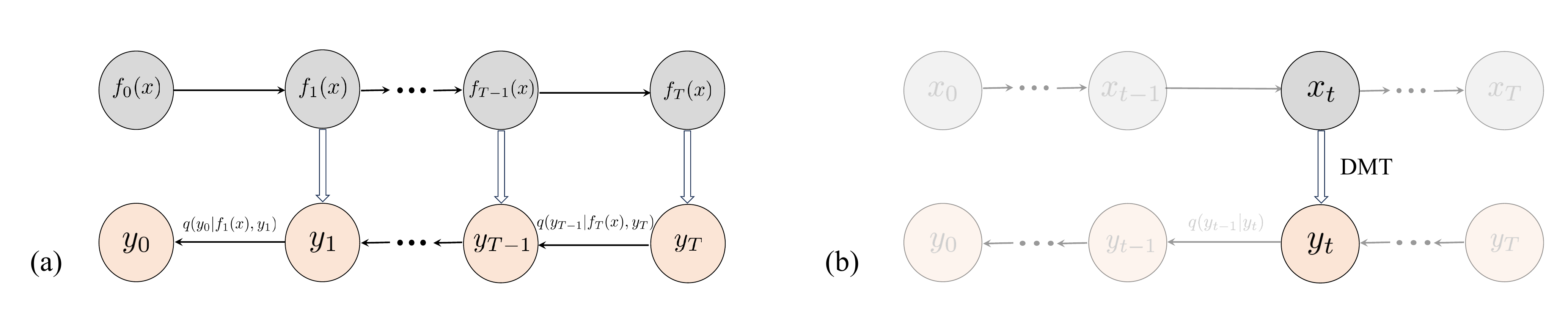}
\vspace{-15pt}
\caption{
    \textbf{Conceptual comparison} between (a) existing methods~\cite{saharia2021palette,choi2021ilvr,liu2021more} and (b) our \method.
    $\{x_t\}_{t=0}^T$ represent different states of the \revise{input from the source domain}, while $y_T \to y_0$ stands for the denoising process of DDPM.
    Here, $T$ denotes the total number of noise-adding steps in the diffusion process.
    Instead of using the \revise{information $f_t(x)$ from the source domain} (which can be the original or noisy image) for an iterative refinement at \textit{each} denoising step $t,t=0,1,\cdots,T$, \method accomplishes the I2I task efficiently by learning an efficient translation module at just one \textit{preset} timestep and fully reusing the pre-trained DDPM.
    How to select an appropriate translation timestep is discussed in \cref{subsec:Optimal}.
}
\label{fig:method}
\vspace{-10pt}
\end{figure*}

In this work\sqq{,} we investigate a more efficient approach to applying DDPM to I2I tasks by endowing a pre-trained DDPM with a translator, \sqq{which we name} {\em Diffusion Model Translator} (\method).
First, we provide a \textit{theoretical} proof that given two diffusion processes on two different image domains involved in an I2I task, it is feasible to accomplish the I2I task by shifting a distribution from one process to \sqq{another} at a particular timestep with appropriate reparameterization.
Based on this theoretical justification, we develop a new efficient DDPM pipeline, as illustrated in \cref{fig:method}b.
Assuming that a DDPM has been prepared for one image domain $y_0$, we use it to decode the latent that is shifted from another domain $x_0$.
To accomplish the domain shift, we apply the same forward diffusion process onto $x_0$ and $y_0$ until a pre-defined timestep $t$, and then employ a neural network to translate $x_t$ to $y_t$ as a typical I2I problem.

There are two major advantages \sqq{to} our approach.
First, the training of \method is independent of DDPM and can be executed very efficiently.
Second, \method can benefit from using all the previous techniques in the I2I field (\textit{e.g.}, such as Pix2Pix~\cite{isola2017image}, TSIT~\cite{jiang2020tsit}, SPADE~\cite{park2019SPADE}, and SEAN~\cite{Zhu_2020_CVPR}), for a better performance.
Furthermore, regarding the choice of the timestep $t$ to perform domain transfer, we propose a practical strategy to automatically select an appropriate timestep for a given data distribution.

To empirically validate the \revise{efficacy} of our method, we conducted evaluation on four I2I tasks: image stylization, image colorization, segmentation to image, and sketch to image.
Both qualitative and quantitative results demonstrate the superiority of our method over existing diffusion-based alternatives as well as the GAN-based counterparts of \method.

\section{Related Work}\label{sec:related-work}

\noindent\sqq{In a forward diffusion process,} a \textbf{Diffusion probabilistic model (DPM)}~\cite{sohl2015deep,ho2020denoising} transforms a given data distribution into a simple latent distribution, such as a Gaussian distribution. 
Due to its strong capabilities, DPM has achieved great success in various fields, including speech synthesis~\cite{chen2020wavegrad,kong2020diffwave}, video synthesis~\cite{ho2022video,ho2022imagen}, image super-resolution~\cite{saharia2021image,li2022srdiff}, conditional generation~\cite{choi2021ilvr,wang2022pretraining}\sqq{,} and image-to-image translation~\cite{saharia2021palette,sasaki2021unit}.
Denoising diffusion probabilistic model (DDPM)~\cite{ho2020denoising} \revise{assumes} the Markovian property of the forward diffusion process.
For a dataset of images, the forward diffusion process is realized by corrupting each image $x_0$ \sqq{through the addition of} standard Gaussian noise to reduce it into a completely random noise image. 
Formally, given the variance schedules $\alpha_t\in[0,1],t=1,2,\cdots,T, \beta_t=1-\alpha_t$, we can write the Markov chain as:
\begin{align}
& q(x_{1:T}|x_0)=\prod_{t=1}^T q(x_t|x_{t-1}),\\
& q(x_t|x_{t-1})\sim\mathcal N(x_t;\sqrt{\alpha_t}x_{t-1},\beta_t I),
\end{align}
where $x_T\sim\mathcal N(x_T;0,I)$ and $I$ is the identity matrix.

When reversing this diffusion process, DDPM serves as a generator for data generation in the form $p_{\theta}(x_0) = \int p_{\theta}(x_{0:T})dx_{1:T}$ starting from $x_T$:
\begin{align}
& p_{\theta}(x_{0:T})=p_{\theta}(x_T)\prod_{t=1}^T p_{\theta}(x_{t-1}|x_t),\\
& p_{\theta}(x_{t-1}|x_t)\sim\mathcal N(x_{t-1};\mu_{\theta}(x_t,t),\Sigma_{\theta}(x_t,t))\sqq{,}
\end{align}
so that any sample $x_T$ in the latent distribution will be mapped back to $x_0$ in the original data distribution. 
To achieve its reverse process for image synthesis, DDPM parameterizes the mean $\mu_{\theta}(x_t,t)$ by a time-dependent model $\epsilon_{\theta}(x_t,t)$ and optimizes the following simplified objective function:
%
%
%
%
\begin{align}
\mathcal{L} = \mathbb{E}_{q(x_0, t, \epsilon)}\left[\|\epsilon - \epsilon_{\theta}(\sqrt{\balpha{t}}x_0 + \sqrt{1 - \balpha{t}}\epsilon, t)\|^2 \right].
\end{align}

\begin{figure*}[t]
\centering
\includegraphics[width=1.\textwidth]{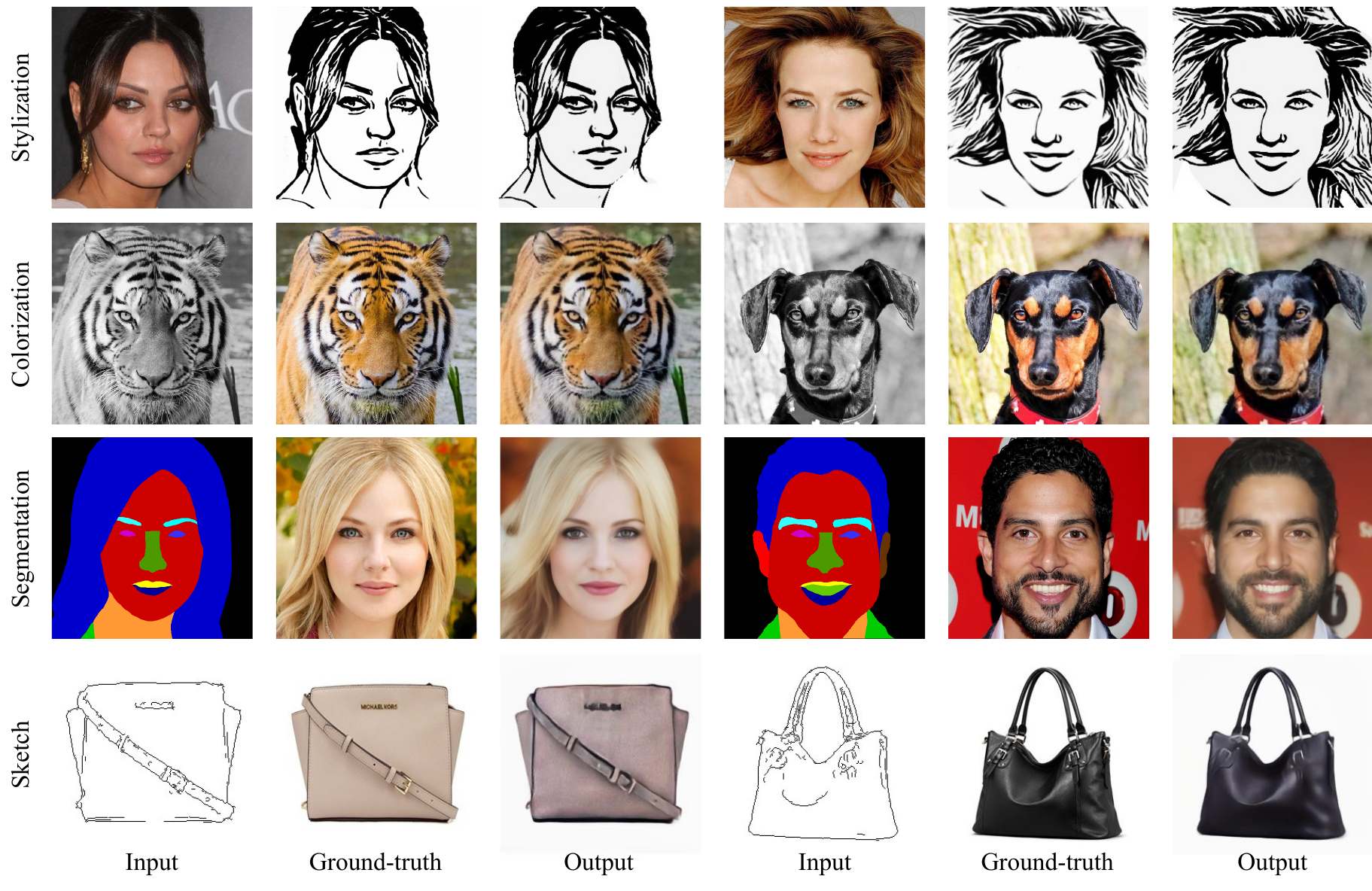}
\vspace{-10pt}
\caption{
    \textbf{Qualitative results} of our proposed \method on four I2I tasks: image stylization, image colorization, segmentation to image, and sketch to image. Here we equip a pre-trained DDPM with an efficient translation module.
    Our approach makes adequate use of the content information from the input \revise{condition} as well as the domain knowledge contained in the learned denoising process.
    %
}
\label{fig:teaser}
\vspace{-5pt}
\end{figure*}

\noindent\textbf{Faster DPM} attempts to explore shorter trajectories rather than the complete reverse process, while ensuring that the synthesis performance is comparable to the original DPM.
Some existing methods seek the trajectories using the grid search~\cite{chen2020wavegrad}.
However, this is only suitable for short reverse processes because its time complexity grows exponentially.
Other methods try to find optimal trajectories by solving a least-cost-path problem with a dynamic programming (DP) algorithm~\cite{watson2021learning,bao2022analyticdpm}.
Another representative category of fast sampling methods use\sqq{s} high-order differential equation (DE) solvers~\cite{jolicoeur2021gotta,Liu0LZ22,PopovVGSKW22,tachibana2021taylor,lu2022dpm}.
Some GAN-based methods also consider larger sampling step size\sqq{. For instance,} \cite{xiao2022DDGAN} \sqq{demonstrates learning a multi-modal distribution within a conditional GAN using a larger step size.}

\noindent\textbf{Image-to-image translation} (I2I) aims to translate an input image from a given source domain to another image in a given target domain, with input-output paired training data~\cite{isola2017image}.
To this end, the conditional generative adversarial network (cGAN) is designed to inject the information of the input image into the generation decoder with the adversarial loss~\cite{mirza2014conditional,goodfellow2014generative}.
The cGAN-based algorithms has demonstrated high quality on many I2I tasks~\cite{dong2017semantic,kaneko2017generative,karacan2016learning,ledig2017photo,sangkloy2017scribbler,wang2016generative,zhang2017age,jiang2020tsit,park2019SPADE,Zhu_2020_CVPR}.
However, due to their training instability and the severe mode collapse issue, it is hard for the cGAN-based methods to generate diverse high-resolution images.
Recently, DPM has been applied to the I2I task.
Palette~\cite{saharia2021palette} introduces the novel DPM framework to the I2I task by injecting the input \sqq{into} each sampling step for refinement.
Some methods use pre-trained image synthesis models for the I2I task~\cite{wang2022pretraining}.
Despite the high quality of synthesized images, the generation process of these existing methods is extremely time-consuming. 
Our \revise{work} tackles this issue by proposing a new DDPM method for the I2I task that works efficiently, without the time-consuming requirement of having to inject the input source information in every \sqq{denoising} step. 
\sqq{Although unpaired data are more accessible for translation tasks, the advantages of paired image-to-image (I2I) tasks, such as reduced data demands and enhanced synthesis quality, have made them a significant research focus.}

\section{Method}\label{sec:method}

\subsection{Markov process of translation mappings}\label{subsec:Markov_Process}

For an I2I task, traditional DDPM methods directly approximate the real distribution $q(y_0|x_0)$ in which $x_0, y_0$ are paired data from the source domain ${\mathcal D}_x$ and the target domain ${\mathcal D}_y$, respectively.
In contrast, we construct a translation module $p_{\theta}(y_t|x_t)$, which bridges the input \revise{condition} and the pre-trained DDPM.
\sqq{Accordingly}, we can approximate the $q(y_0|x_0)$ using the learned intermediate translation module.
Specifically, given a noise-adding schedule of the forward variance process $\beta_i\in[0,1],t=1,2,\cdots T$, $\alpha_i=1-\beta_i$ and $\balpha{t}=\prod_{i=1}^t\alpha_t$, we first generalize the forward Markov process to the joint distribution of $(x_{1:t}, y_{1:t})$ as below:
\begin{align}
q(y_{1:t},x_{1:t}|y_0, x_0)& =\prod_{i=1}^t q(x_i|x_{i-1})\prod_{j=1}^t q(y_j|y_{j-1}), \label{eq:3.1} \\
q(x_i|x_{i-1})& \sim\mathcal N(x_i;\sqrt{\alpha_i}x_{i-1},\beta_i I), \\
q(x_t|x_0)& \sim\mathcal N(x_t;\sqrt{\balpha{t}}x_0,(1-\balpha{t}) I), \\
q(y_j|y_{j-1})& \sim\mathcal N(y_j;\sqrt{\alpha_i}y_{i-1},\beta_i I), \\
q(y_t|y_0)& \sim\mathcal N(y_t;\sqrt{\balpha{t}}y_0,(1-\balpha{t}) I).
\end{align}

The corresponding DDPM trained on the target domain provides a reverse Markov process to approximate $q(y_0)$ from a sample $y_T$ \sqq{drawn} from the standard Gaussian distribution, \newrevise{\textit{i.e.},} $y_T\sim\mathcal N(y_T;0,I)$.
Note that during the denoising process, $y_i$ is only determined by $y_{i+1}$ and irrelevant to $x_{0:t}$ for $i\in[0,t-1]$. We choose to construct the translation mapping at some specified step\footnote{The selection of this specified step is discussed in Section \ref{sec:exp}.} of the diffusion forward process using $p_{\theta}(y_t|x_t)$, which induces the following Markov process:
%

\begin{align}
p_{\theta}(y_{0:t},x_{1:t}|x_0)=p_{\theta}(y_t|x_t)\prod_{i=1}^t q(x_i|x_{i-1})\prod_{j=1}^t q(y_{j-1}|y_j), \label{eq:3.2}
\end{align}
where $q(y_{j-1}|y_j)$ is the denoising process of the pre-trained DDPM. 

\subsection{Translation mappings of DDPM}\label{subsec:Transfer_Mappings}

\sqq{Let} $p_{\theta}(y_0|x_0)=\int p_{\theta}(y_{0:t},x_{1:t}|x_0)dy_{1:t}dx_{1:t}$ \sqq{represent} the sampling distribution of $q(y_0|x_0)$\sqq{, where} $p_{\theta}(y_t|x_t)$ \sqq{serves to bridge} the two domains.
By making use of the variational lower bound to optimize the negative log-likelihood, we have the following lemma:
\begin{lemma}\label{lem:1}
The negative log-likelihood of $-\log p_{\theta}(y_0|x_0)$ has the following upper bound,
\begin{align}
-\log p_{\theta}(y_0|x_0)\leqslant\mathbb E_q\left[\log\frac{q(y_{1:t},x_{1:t}|y_0, x_0)}{p_{\theta}(y_{0:t},x_{1:t}|x_0)}\right],
\end{align}
where $q=q(y_{1:t},x_{1:t}|y_0, x_0).$
\end{lemma}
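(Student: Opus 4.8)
\emph{Proof proposal.} The plan is to run the standard variational (evidence lower bound) argument, now applied to the augmented forward process over the pair $(y_{1:t},x_{1:t})$ with the endpoints $x_0$ and $y_0$ held fixed. First I would express the marginal likelihood as an expectation with respect to $q$. Starting from the definition $p_{\theta}(y_0|x_0)=\int p_{\theta}(y_{0:t},x_{1:t}|x_0)\,dy_{1:t}dx_{1:t}$, I multiply and divide the integrand by $q(y_{1:t},x_{1:t}|y_0,x_0)$; this is legitimate because every transition kernel appearing in $q$ and in $p_\theta$ is a non-degenerate Gaussian, so $q$ has full support and the ratio is well defined almost everywhere. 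The integral then becomes $\mathbb{E}_{q}\!\left[\,p_{\theta}(y_{0:t},x_{1:t}|x_0)/q(y_{1:t},x_{1:t}|y_0,x_0)\,\right]$.

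Next I would take $-\log$ of both sides and apply Jensen's inequality. Since $-\log$ is convex, $-\log \mathbb{E}_{q}[Z]\leqslant \mathbb{E}_{q}[-\log Z]$ for the positive random variable $Z=p_{\theta}(y_{0:t},x_{1:t}|x_0)/q(y_{1:t},x_{1:t}|y_0,x_0)$. Rearranging the ratio inside the logarithm yields exactly the asserted bound, $-\log p_{\theta}(y_0|x_0)\leqslant \mathbb{E}_{q}\!\left[\log\big(q(y_{1:t},x_{1:t}|y_0,x_0)/p_{\theta}(y_{0:t},x_{1:t}|x_0)\big)\right]$, with $q=q(y_{1:t},x_{1:t}|y_0,x_0)$ as in the statement.

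The computation is essentially routine; the two places that warrant a line of justification are (i) the marginalization identity, i.e.\ that integrating the joint $p_{\theta}(y_{0:t},x_{1:t}|x_0)$ of \cref{eq:3.2} over $y_{1:t}$ and $x_{1:t}$ returns $p_{\theta}(y_0|x_0)$ — this holds because $\prod_i q(x_i|x_{i-1})$ and $\prod_j q(y_{j-1}|y_j)$ are each products of normalized conditional kernels and $p_{\theta}(y_t|x_t)$ is the only factor coupling the two chains — and (ii) the absolute-continuity/support condition that makes the importance-reweighting step and Jensen's inequality valid, which as noted is immediate from the Gaussianity of all kernels. I do not anticipate any real obstacle here: the lemma is the direct analogue of the DDPM variational bound, and the natural next step (presumably carried out after the lemma) is to factor the right-hand side into a telescoping sum of KL divergences so as to isolate the single translation term $p_{\theta}(y_t|x_t)$.
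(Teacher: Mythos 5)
Your proof is correct. The paper reaches the same bound by a dual route: instead of importance-reweighting the marginalization integral and applying Jensen's inequality, it adds the non-negative quantity $D_{KL}\bigl(q(y_{1:t},x_{1:t}|y_0,x_0)\,\|\,p_{\theta}(y_{1:t},x_{1:t}|y_0,x_0)\bigr)$ to $-\log p_{\theta}(y_0|x_0)$, rewrites the model posterior via the chain rule as $p_{\theta}(y_{0:t},x_{1:t}|x_0)/p_{\theta}(y_0|x_0)$, and notes that the resulting $-\log p_{\theta}(y_0|x_0)$ inside the expectation is constant under $q$ and cancels the term outside. The two derivations are equivalent --- the Jensen gap in your argument is exactly the KL term the paper adds, so the slack in the bound is identical --- but they foreground different side conditions: yours leans on the marginalization identity (which the paper takes as the \emph{definition} of $p_{\theta}(y_0|x_0)$, so your justification of it is welcome but not strictly needed) and on the absolute continuity of $p_\theta$ with respect to $q$, which you correctly discharge via the Gaussianity of all kernels; the paper's version instead needs the model posterior $p_{\theta}(y_{1:t},x_{1:t}|y_0,x_0)$ to be well defined with finite KL, which amounts to the same condition. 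Your closing remark about the next step is also accurate: the paper's Theorem 1 factors the right-hand side using the forward and reverse factorizations together with Bayes' rule to isolate $D_{KL}(q(y_t|y_0)\|p_{\theta}(y_t|x_t))$ plus a non-negative entropy constant.
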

In other words, the translation mapping can be learned by optimizing the variational lower bound:
\begin{align}\label{eq:3.3}
\mathcal L_{CE}&=\mathbb -\mathbb E_{q(y_0|x_0)}\left[\log p_{\theta}(y_0|x_0)\right]\\
&\leqslant\mathbb E_{q(y_{0:t},x_{1:t}|x_0)}\left[\log\frac{q(y_{1:t},x_{1:t}|y_0, x_0)}{p_{\theta}(y_{0:t},x_{1:t}|x_0)}\right]:=\mathcal L_{VLB}.
\end{align}

First, we claim that the optimal $p_{\theta}(y_t|x_t)$ follows a Gaussian distribution up to a non-negative constant of \cref{eq:3.3}.

\begin{theorem}[Closed-form expression]\label{theorem:1}
The loss function in \cref{eq:3.3} has a closed-form representation.
The training is equivalent to optimizing a KL-divergence up to a non-negative constant, \textit{i.e.},
\begin{align}
\mathcal L_{VLB}=\mathbb E_{q(y_0,x_t|x_0)}\left[D_{KL}(q(y_t|y_0)\|p_{\theta}(y_t|x_t))\right] + C\sqq{.}\label{eq:3.4}
\end{align}
\end{theorem}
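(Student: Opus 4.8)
The plan is to unfold the variational bound in \cref{eq:3.3} by substituting the two factorizations \cref{eq:3.1} and \cref{eq:3.2}, and then to run the telescoping argument of vanilla DDPM while treating $p_\theta(y_t|x_t)$ as the only trainable factor. First I would use that, by \cref{eq:3.1}, the augmented forward process factorizes as $q(y_{1:t},x_{1:t}|y_0,x_0)=q(x_{1:t}|x_0)\,q(y_{1:t}|y_0)$ with $q(x_{1:t}|x_0)=\prod_{i=1}^t q(x_i|x_{i-1})$; inside the ratio of \cref{eq:3.3} this source-chain factor cancels against the identical product in \cref{eq:3.2}, leaving $\mathcal L_{VLB}=\mathbb E_q\!\big[\log\frac{\prod_{j=1}^t q(y_j|y_{j-1})}{p_\theta(y_t|x_t)\,\prod_{j=1}^t q(y_{j-1}|y_j)}\big]$, where the $q(y_{j-1}|y_j)$ in the denominator are the frozen pre-trained denoising kernels.

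Next I would rewrite each forward step for $j\geq 2$ via the posterior identity $q(y_j|y_{j-1})=q(y_j|y_{j-1},y_0)=\frac{q(y_{j-1}|y_j,y_0)\,q(y_j|y_0)}{q(y_{j-1}|y_0)}$ (legitimate by the Markov property of \cref{eq:3.1}), let the factors $q(y_j|y_0)/q(y_{j-1}|y_0)$ telescope, and absorb the leftover $q(y_1|y_0)$; this converts the numerator product into $q(y_t|y_0)\prod_{j=2}^t q(y_{j-1}|y_j,y_0)$. Re-grouping the logarithm then decomposes $\mathcal L_{VLB}$ into three additive pieces: (i) $\log\frac{q(y_t|y_0)}{p_\theta(y_t|x_t)}$, (ii) $\sum_{j=2}^t\log\frac{q(y_{j-1}|y_j,y_0)}{q(y_{j-1}|y_j)}$, and (iii) $-\log q(y_0|y_1)$, of which only (i) depends on $\theta$.

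For (i), the crucial observation is that $p_\theta(y_t|x_t)$ depends on the sample only through $x_t$ while $q(y_t|y_0)$ depends only on $y_0$, so one may integrate $y_t$ out first against $q(y_t|y_0)$ with $x_t$ held fixed; this yields exactly $D_{KL}\!\big(q(y_t|y_0)\,\|\,p_\theta(y_t|x_t)\big)$, and since the $x$- and $y$-chains are independent given their endpoints the remaining average is over the marginal $q(y_0,x_t|x_0)=q(y_0|x_0)\,q(x_t|x_0)$, reproducing the first term of \cref{eq:3.4}. For (ii) and (iii) I would package the expectation as the constant $C$, which is manifestly independent of $\theta$; to see $C\geq 0$, note that the conditional of $y_{j-1}$ given $(y_0,y_j)$ under the forward process is $q(y_{j-1}|y_j,y_0)$ (Markov again), so (ii) contributes $\sum_{j=2}^t\mathbb E\big[D_{KL}(q(y_{j-1}|y_j,y_0)\,\|\,q(y_{j-1}|y_j))\big]\geq 0$, while (iii) is the (discretized-)decoder cross-entropy, which is also non-negative; summing gives $C\geq 0$ and hence \cref{eq:3.4}. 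As an immediate corollary, since $q(y_t|y_0)$ is Gaussian the $\theta$-dependent term is minimized by a Gaussian $p_\theta(y_t|x_t)$ matching its first two moments, which justifies the Gaussian parameterization claimed just before the theorem.

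The step I expect to be the main obstacle is (i): one must be careful that the expectation genuinely collapses to a KL divergence. This relies on the independence structure of the augmented forward process together with $p_\theta(y_t|x_t)$ being a function of $x_t$ alone, so that the inner integral over $y_t$ is taken against $q(y_t|y_0)$ with $p_\theta$ held constant; integrating in the wrong order would leave the $\log p_\theta$ term unable to assemble into a cross-entropy. A related subtlety worth stating explicitly is the reduction $q(y_j|y_{j-1})=q(y_j|y_{j-1},y_0)$ and the fact that marginalizing out $y_{1:j-2}$ and $y_{j+1:t}$ leaves $q(y_{j-1}|y_j,y_0)$ as the relevant conditional in piece (ii); both follow from the Markov property of \cref{eq:3.1}, but they are the load-bearing facts behind the closed form.
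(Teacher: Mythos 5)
Your proof is correct, and its opening and closing moves coincide with the paper's: you cancel the source chain $\prod_{i}q(x_i|x_{i-1})$ between \cref{eq:3.1} and \cref{eq:3.2}, isolate $-\log p_{\theta}(y_t|x_t)+\sum_{j=1}^{t}\log\frac{q(y_j|y_{j-1})}{q(y_{j-1}|y_j)}$, and collapse the $\theta$-dependent piece into $\mathbb E_{q(y_0,x_t|x_0)}\left[D_{KL}(q(y_t|y_0)\|p_{\theta}(y_t|x_t))\right]$ exactly as the paper does (your care about integrating $y_t$ against $q(y_t|y_0)$ with $x_t$ held fixed is the same conditional-independence fact the paper uses silently). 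The middle step is genuinely different. The paper applies Bayes' rule to the marginals, $\frac{q(y_j|y_{j-1})}{q(y_{j-1}|y_j)}=\frac{q(y_j)}{q(y_{j-1})}$, telescopes in one line to $\frac{q(y_t|y_0)}{q(y_0|y_t)}$, and obtains the single constant $C=\mathbb E_{q(y_t)}\left[H(q(y_0|y_t))\right]$; you instead run the standard DDPM decomposition through the $y_0$-conditioned posteriors $q(y_{j-1}|y_j,y_0)$, obtaining $C$ as $\sum_{j=2}^{t}\mathbb E\left[D_{KL}(q(y_{j-1}|y_j,y_0)\|q(y_{j-1}|y_j))\right]$ plus a decoder entropy term. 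The two constants are equal, being two expressions for the same $\theta$-independent remainder. The paper's route is shorter, but its telescoping identity is valid only if the frozen kernel $q(y_{j-1}|y_j)$ is the exact Bayes reversal of the forward marginal chain --- an assumption hidden in reusing the letter $q$ for the pre-trained denoiser; your per-step KL form remains non-negative and $\theta$-independent for any fixed normalized reverse kernel, which is arguably the more faithful reading of \cref{eq:3.2}. The one caveat you share with the paper is that the residual entropy/cross-entropy term is non-negative only for discrete or discretized likelihoods (differential entropy can be negative); you flag this explicitly, and the paper's claim $C\geqslant 0$ carries the same qualification, so it is not a gap relative to the paper's own proof.
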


For the given closed-form expression in \cref{eq:3.4}, the optimal $p_{\theta}(y_t|x_t)$ follows a Gaussian distribution and its mean $\mu_{\theta}$ has an analytic form, as summarized in the \Cref{theorem:2} below:

\begin{theorem}[Optimal solution to \cref{eq:3.4}]\label{theorem:2}
The optimal $p_{\theta}(y_t|x_t)$ follows a Gaussian distribution with its mean being
\begin{align}
\mu_{\theta}(x_t) = \sqrt{\balpha{t}}y_0.
\end{align}
\end{theorem}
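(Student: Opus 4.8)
The plan is to start from the closed-form expression established in \Cref{theorem:1}, namely $\mathcal L_{VLB}=\mathbb E_{q(y_0,x_t|x_0)}[D_{KL}(q(y_t|y_0)\|p_{\theta}(y_t|x_t))]+C$ with $C\geqslant 0$ independent of $\theta$, and to exploit that the target distribution inside the KL, $q(y_t|y_0)\sim\mathcal N(y_t;\sqrt{\balpha{t}}y_0,(1-\balpha{t})I)$, is an explicit Gaussian. Since $D_{KL}(q\|\cdot)$ with Gaussian $q$ is driven toward its minimum by a Gaussian second argument, it is natural (and, at the optimum, without loss) to search for $p_{\theta}(y_t|x_t)$ within the Gaussian family $\mathcal N(y_t;\mu_{\theta}(x_t),\Sigma_{\theta}(x_t))$, so that the claim reduces to identifying the optimal mean function $\mu_{\theta}(\cdot)$.

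The key steps, in order: (i) write out the KL between the two Gaussians; the only term depending on $\mu_{\theta}$ is the quadratic form $\tfrac12(\mu_{\theta}(x_t)-\sqrt{\balpha{t}}y_0)^{\top}\Sigma_{\theta}(x_t)^{-1}(\mu_{\theta}(x_t)-\sqrt{\balpha{t}}y_0)$, while the trace, log-determinant and dimension terms are free of $\mu_{\theta}$. (ii) Substitute into \Cref{theorem:1} and fold every $\mu_{\theta}$-independent piece (including $C$) into a new constant, so that minimizing $\mathcal L_{VLB}$ over $\mu_{\theta}$ is equivalent to minimizing $\mathbb E_{q(y_0,x_t|x_0)}\big[\tfrac12(\mu_{\theta}(x_t)-\sqrt{\balpha{t}}y_0)^{\top}\Sigma_{\theta}(x_t)^{-1}(\mu_{\theta}(x_t)-\sqrt{\balpha{t}}y_0)\big]$. (iii) Minimize pointwise in $x_t$: since $\mu_{\theta}$ may be an arbitrary function of $x_t$, the first-order condition yields $\mu_{\theta}(x_t)=\sqrt{\balpha{t}}\,\mathbb E[y_0\mid x_t]$, i.e.\ $\sqrt{\balpha{t}}$ times the conditional mean of the clean target; under the paired training coupling, where each sampled minibatch element supplies the true $(x_0,y_0)$ and hence the regression target for its $x_t$, this reads exactly $\mu_{\theta}(x_t)=\sqrt{\balpha{t}}y_0$. (iv) If desired, one further optimizes the covariance — it matches the second moment of the target, $(1-\balpha{t})I$ plus the conditional covariance of $\sqrt{\balpha{t}}y_0$ given $x_t$ — confirming the optimal $p_{\theta}(y_t|x_t)$ is Gaussian; but this is not needed for the stated conclusion, which only concerns $\mu_{\theta}$.

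The main obstacle is conceptual rather than computational: the variational family lets $p_{\theta}(y_t|x_t)$ depend only on $x_t$, whereas the KL target $q(y_t|y_0)$ depends on $y_0$, so the genuine minimizing mean is the conditional expectation $\sqrt{\balpha{t}}\,\mathbb E[y_0\mid x_t]$, and the statement $\mu_{\theta}(x_t)=\sqrt{\balpha{t}}y_0$ should be read as the per-pair regression target that this expectation induces under the data coupling — mirroring how DDPM's simplified objective regresses onto the sampled noise rather than onto its conditional mean. A secondary point to pin down carefully is that restricting to Gaussian $p_{\theta}$ does not change the $\arg\min$ — equivalently, that after dropping the entropy of $q(y_t|y_0)$ the remaining cross-entropy $\mathbb E_{q(y_0|x_t)}[-\log p_{\theta}(y_t|x_t)]$ is minimized, within the modeled family, by matching the first two moments — and that the non-negative constant $C$ of \Cref{theorem:1} genuinely carries no $\theta$-dependence, so it cannot affect the optimizer.
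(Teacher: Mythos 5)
Your proposal is correct and follows essentially the same route as the paper's proof: both start from the closed form of \Cref{theorem:1}, use that $q(y_t|y_0)$ is the Gaussian $\mathcal N(\sqrt{\balpha{t}}y_0,(1-\balpha{t})I)$, and conclude that the KL term is minimized by matching $p_{\theta}(y_t|x_t)$ to it, so the optimal mean is $\sqrt{\balpha{t}}y_0$. You are in fact more careful than the paper at the one delicate point---that $\mu_{\theta}$ may depend only on $x_t$ while the target depends on $y_0$, so the exact minimizer is $\sqrt{\balpha{t}}\,\mathbb E[y_0\mid x_t]$ and the stated formula is the per-pair regression target under the data coupling---a subtlety the paper's one-line proof simply elides.
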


Detailed proofs of the above lemma and theorems are provided in Appendix B.

\subsection{Reparameterization of $\mu_{\theta}$}\label{subsec:Analytic}

Given the DDPM trained on the target domain, we first apply the same diffusion forward process on both $x_0$ and $y_0$ as a shared encoder \sqq{to represent the mean $\mu_{\theta}(x_t)$}:
\begin{align}\label{eq:3.6}
x_t=\sqrt{\balpha{t}}x_0+\sqrt{1-\balpha{t}}z_t,\quad y_t=\sqrt{\balpha{t}}y_0+\sqrt{1-\balpha{t}}z_t.
\end{align}

\Cref{theorem:2} reveals that $\mu_{\theta}$ needs to approximate the expression $\sqrt{\balpha{t}}y_0$ with $x_t$ as the only available input. Then\sqq{,} we apply the following parameterization,
\begin{align}\label{eq:3.5}
\mu_{\theta}(x_t)=f_{\theta}(x_t)-\sqrt{1-\balpha{t}}z(x_t),
\end{align}
where $f_{\theta}$ is a trainable function and $z(x_t)=z_t$\sqq{,} which is set to the shared noise component of $x_0$ and $y_0$. 
The KL-divergence in \cref{eq:3.4} is optimized by minimizing the difference between the two means together with the variance $\Sigma_{\theta}$ of $p_{\theta}(y_t|x_t)$.
Noting that $\Sigma_{\theta}=(1-\balpha{t})I$, the objective function then has the following form,
\begin{align}
\mathcal L_t
&=\mathbb E_q\left[\frac{1}{2(1-\balpha{t})}\|f_{\theta}(x_t)-y_t\|^2\right].
\end{align}

\cref{eq:3.5} implies that inferring $y_t\sim p_{\theta}(y_t|x_t)$ is to compute $f_{\theta}(x_t)-\sqrt{1-\balpha{t}}z_t+\sqrt{1-\balpha{t}}z$, where $z\sim\mathcal N(0,I)$.

\subsection{Determining an appropriate timestep for translation}\label{subsec:Optimal}

\definecolor{myblue}{RGB}{0, 0, 0}
\definecolor{myred}{RGB}{0, 0, 0}
\begin{figure*}[tp]
\centering
\begin{tabular}{cccc}
\begin{minipage}[t]{0.22\linewidth}\includegraphics[width=1\linewidth]{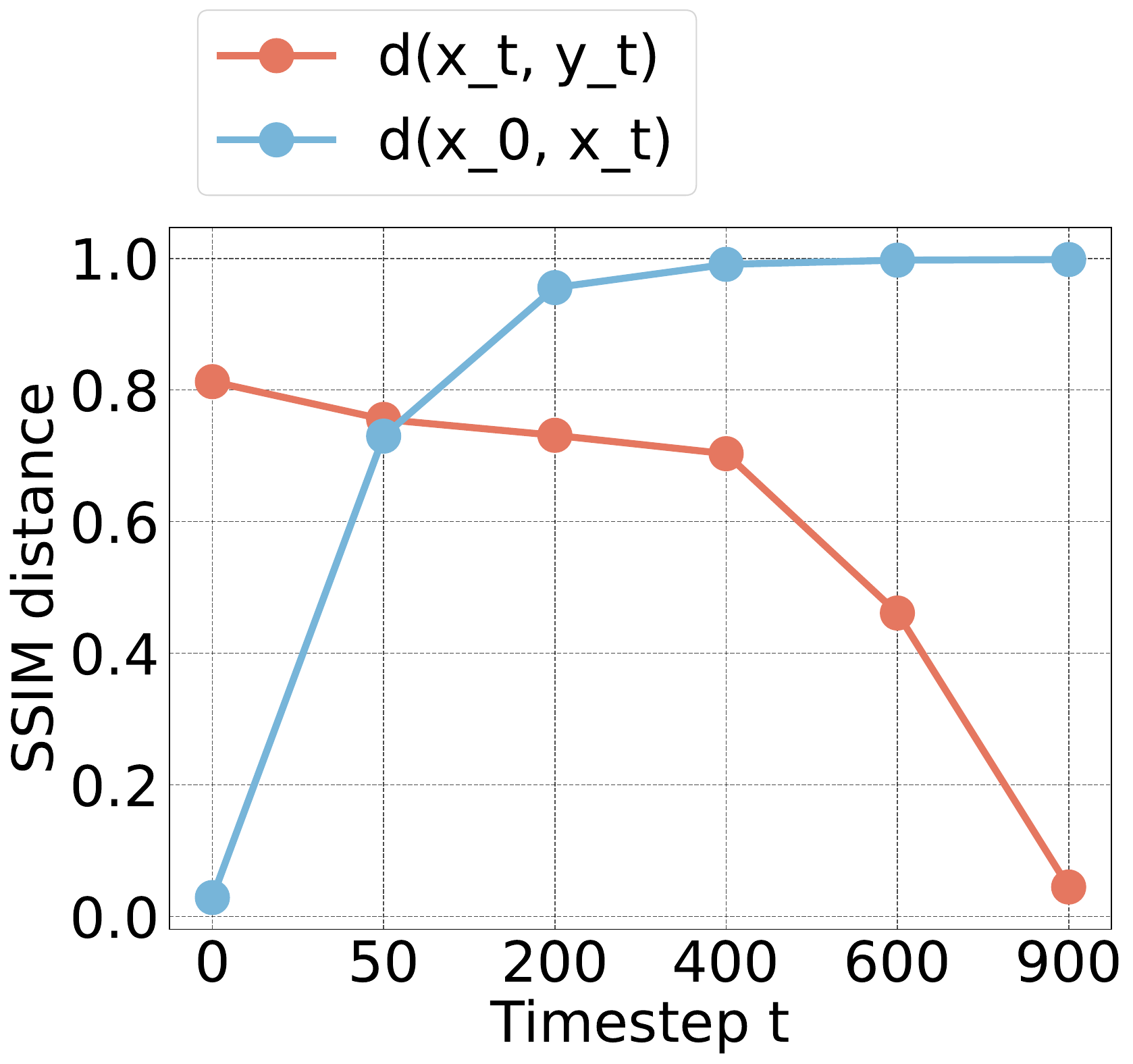}\end{minipage} & \begin{minipage}[t]{0.22\linewidth}\includegraphics[width=1\linewidth]{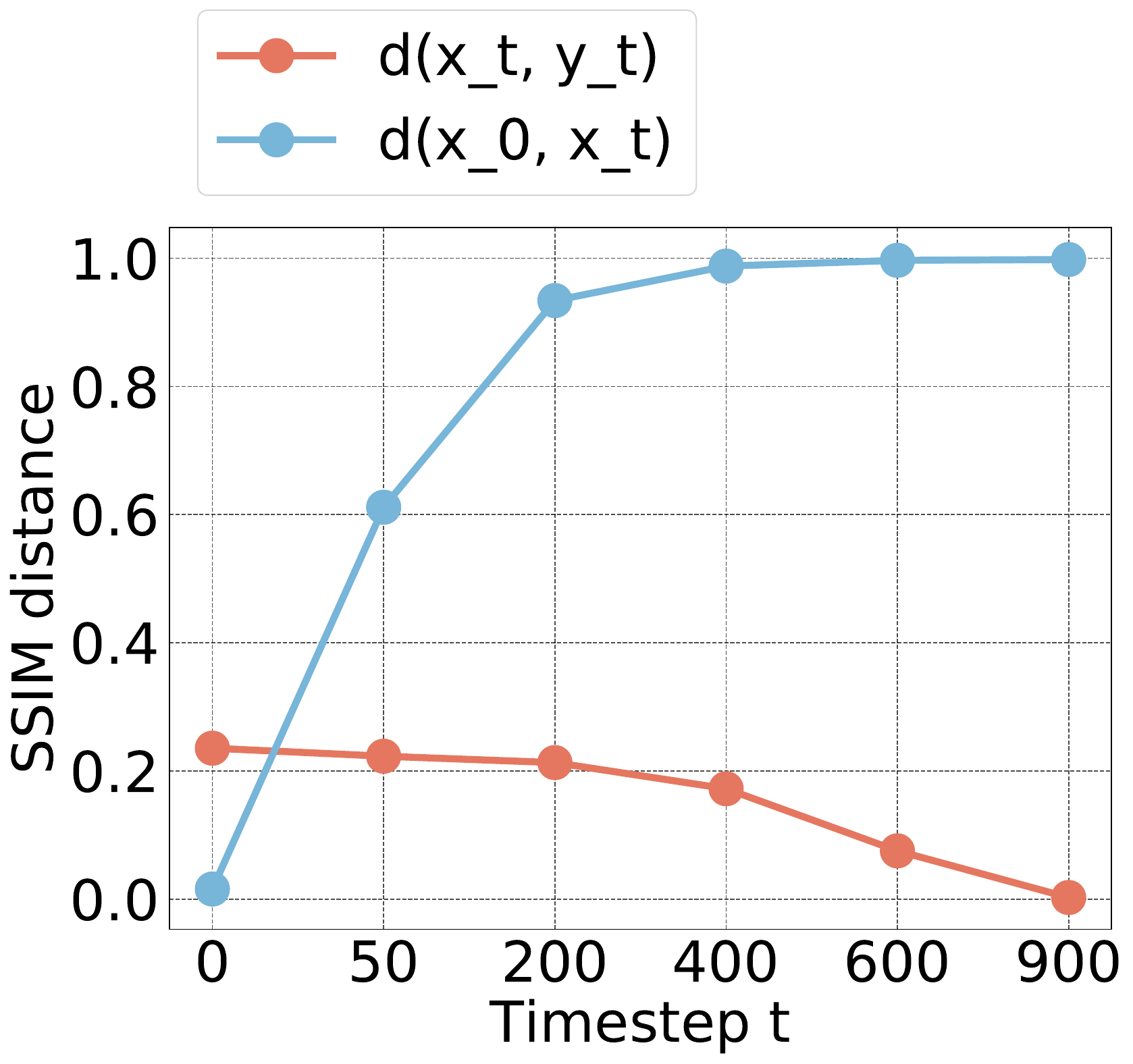}\end{minipage} &
\begin{minipage}[t]{0.22\linewidth}\includegraphics[width=1\linewidth]{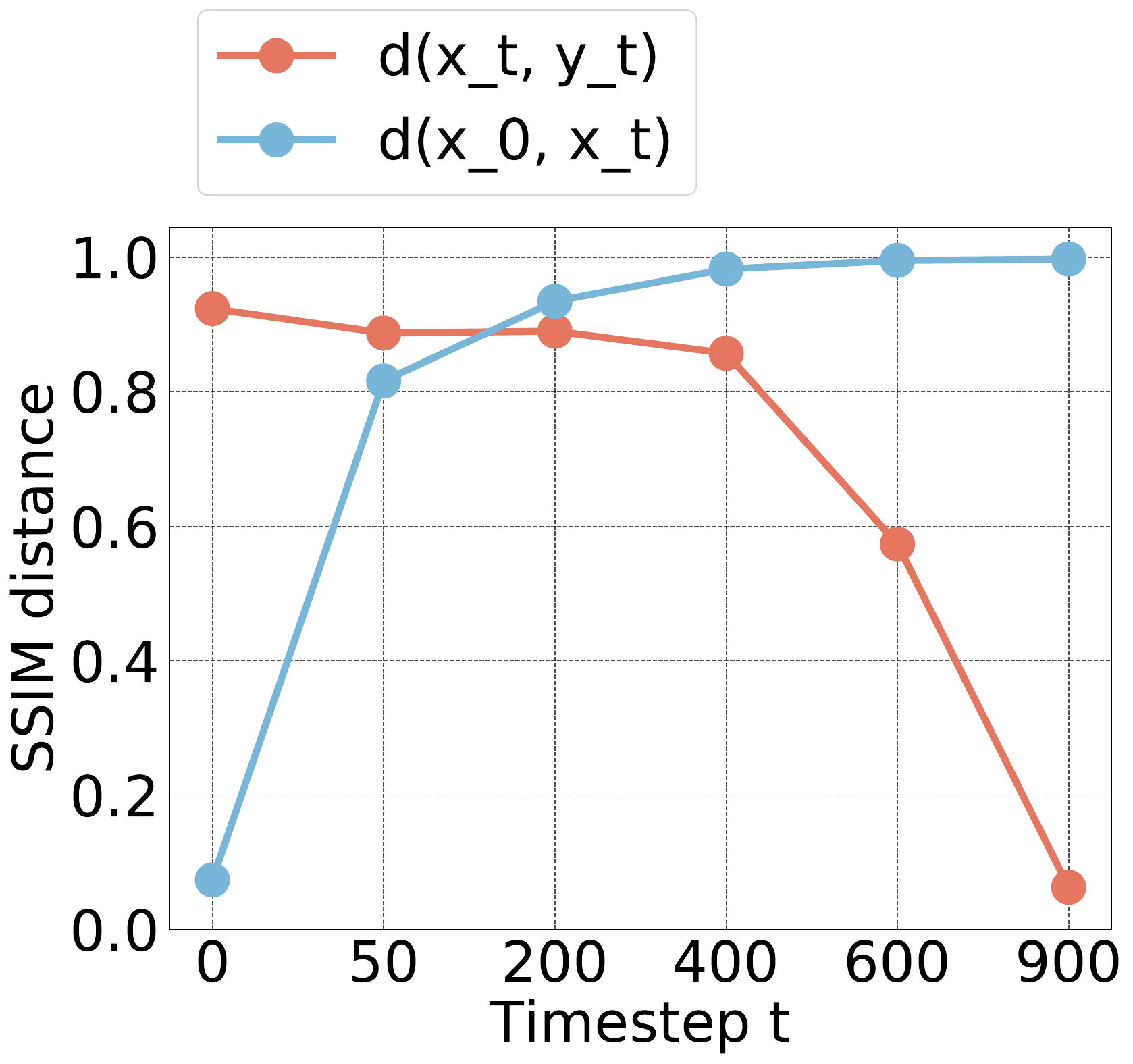}\end{minipage} & \begin{minipage}[t]{0.22\linewidth}\includegraphics[width=1\linewidth]{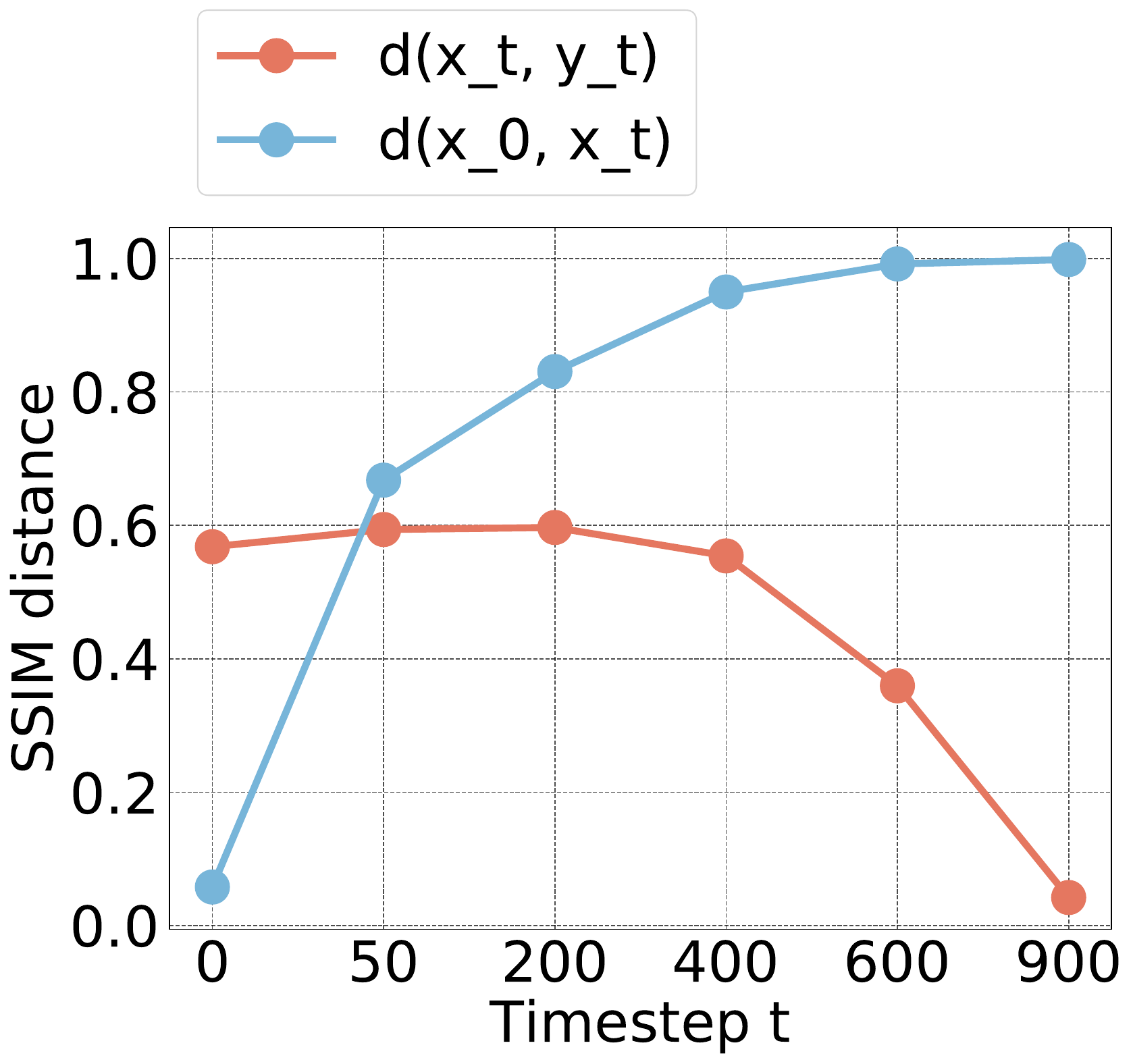}\end{minipage} \\
(a) Portrait & (b) AFHQ & (c) CelebA-HQ & (d) Edges2handbags
\end{tabular}
\caption{
    \textbf{Analysis on the preset timestep, $t$.}
    Our \method needs \revise{a} pre-defined timestep to learn and perform the distribution shift.
    We plot the distance between $(x_t,y_t)$ and $(x_0, x_t)$ at different timesteps, which are shown in red and blue curves, respectively.
    When $t$ increases, $d(x_t, y_t)$ decreases so that the distribution is easier to shift from $x_t$ to $y_t$, while $d(x_0, x_t)$ increases so that the \revise{input condition} signal is becoming less relevant because $x_t$ is drifting away from the input $x_0$.
    Considering such a trade-off, we select the intersection as the practical choice of the timestep for \method learning.
}
\label{fig:ablation_t_quantitative}
\vspace{-10pt}
\end{figure*}

Recall that we encode the same forward diffusion process onto both $x_0$ and $y_0$ using a shared encoder (ref. to \cref{eq:3.6}),
where $z_t$ is independent of $x_0$ and $y_0$.
As $t$ tends to $T$, $x_t$ and $y_t$ will converge to the same Gaussian noise simultaneously, since  $x_t,y_t\rightarrow z_T\sim\mathcal N(0,I)$.
Hence, as $t$ increases, the distance between $(x_t,y_t)$ will decrease and the distance between $(x_0,x_t)$ will increase.
In other words, the training of \method faces a trade-off between the gap between the two potential domains and the strength of the \revise{condition} signal.
The larger timestep $t$ makes it easier for the \method to learn the translation mapping, while the strength of inference information will be weakened since the injected noise corrupts the origin signal.

\newrevise{
To address this trade-off issue, we provide a theoretical analysis below.
Recall that our proposed diffusion-model-based I2I system consists of three sub-systems: (1) the forward diffusion process from $x_0$ to $x_t$, (2) \method from $x_t$ to $y_t$, and (3) the denoising process via pre-trained diffusion model from $y_t$ to $y_0$.
Our analysis is based on the following observation: the complexity $C$ of the whole system $S$ is determined by the maximal one among the complexities of three sub-systems $(S_1,S_2,S_3)$, \textit{i.e.}, $C(S)=\max\{C(S_1),C(S_2),C(S_3)\}$.
Given a timestep $t$, let $C(S_1)=f(t)$, $C(S_2)=g(t)$, $C(S_3)=h(t)$, where $f(t)$, $g(t)$ and $h(t)$ are complexity curves of diffusing $x_0$ to $x_t$, translating $x_t$ to $y_t$, and denoising $y_t$ to $y_0$ w.r.t. the timestep $t$, respectively.
First, we assume\footnote{\newrevise{This assumption is reasonable because the diffusion and denoising processes are reciprocal at the same time step, although in different domains.}} $f(t)\approx h(t)$.
Then $C(S)=\max\{f(t),g(t)\}$.
Second, we assume\footnote{\newrevise{This assumption is reasonable because the larger the time step, the greater the complexity of forward diffusion and the lower the complexity of \method.}} that $f(t)$ and $g(t)$ are monotone curves.
Then we have the conclusion that {\it $C(S)$ takes the minimum value at the intersection point of two monotone curves $f(t)$ and $g(t)$}.
}

\newrevise{
Accordingly, we propose a simple and effective strategy to determine an appropriate timestep $t$ before training.}
%
We calculate the \revise{$L_1$, $L_2$, Peak Signal-to-Noise Ratio (PSNR), Learned Perceptual Image Patch Similarity (LPIPS)~\cite{zhang2018unreasonable}, Fr\'{e}chet Inception Distance (FID)~\cite{heusel2017gans}, and }Structure Similarity Index Measure (SSIM)~\cite{wang2004image} between $(x_t,y_t)$ and between $(x_0, x_t)$\revise{, among which SSIM achieves the timestep with the best performance.}
\sqq{The results shown in \cref{fig:ablation_t_quantitative} are consistent with our aforementioned findings}: the distance between $(x_t,y_t)$ drops rapidly, while the distance between $(x_0, x_t)$ grows monotonically as the timestep $t$ grows.
Note that the intersection point of the two curves \newrevise{offers a good approximation for the minimum of system complexity.} 
This observation provides us with a pre-selecting strategy that chooses the timestep $t$ of this intersection point as an appropriate timestep $t$ for domain transfer\sqq{.}
We demonstrate in \cref{subsec:Ablation} the performance of using the timestep $t$ thus chosen by this pre-selecting method.

To summarize, we train the \method module in the same way as a simple I2I task.
First, we gradually apply the same diffusion forward process onto both the input \revise{condition} and the desired output until a pre-selected timestep.
Then\sqq{,} we train the function approximator $f_{\theta}$ using a reparameterization strategy to reformulate the objective function.
We theoretically prove the feasibility of the simple \method module and show that the approximator $f_{\theta}$ resembles the reverse process mean function approximator in DDPM~\cite{ho2020denoising}.
We \sqq{verify} the efficiency of the \method in \cref{sec:exp} with comprehensive experiments on a wide range of datasets, and provide the algorithms and the pseudo-codes in Appendix A.

\begin{figure*}[t]
\centering
\includegraphics[width=1.\textwidth]{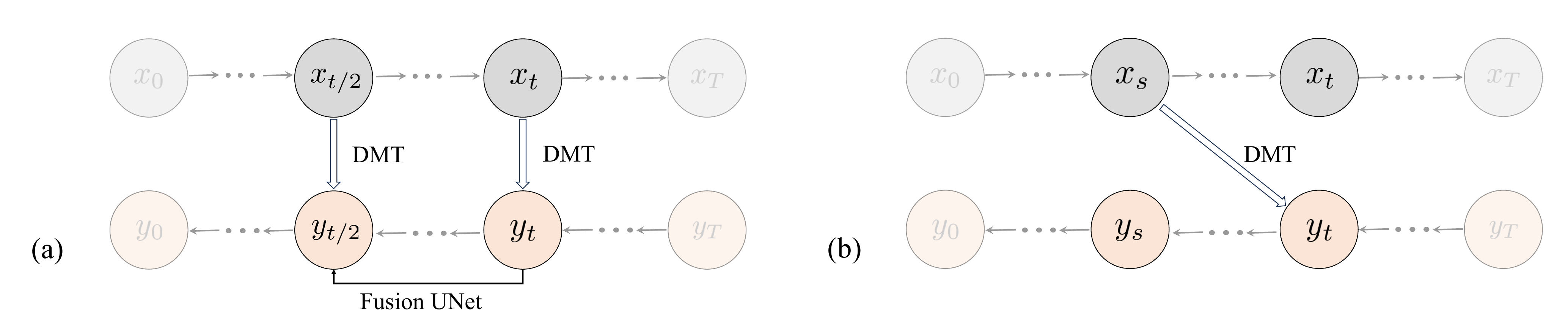}
\vspace{-15pt}
\caption{
    \newrevise{
    \textbf{Conceptual comparison} for (a) multi-step \method and (b) asymmetric \method.
    $\{x_t\}_{t=0}^T$ represent different states of the \revise{input from the source domain}, while $y_T \to y_0$ stands for the denoising process of DDPM.
    Here, $T$ denotes the total number of noise-adding steps in the diffusion process.
    Multi-step \method combines the translation results of \method at two different timesteps with an auxiliary fusion UNet and denoise to achieve the final output, while asymmetric \method applies translation at different timestep pair $(s,t)$.
    More discussions are addressed in \cref{subsec:generalization} and \supp.
    }
}
\label{fig:method_generalization}
\vspace{-10pt}
\end{figure*}

\subsection{\revise{Further discussion of \method}}\label{subsec:generalization}

\setlength{\tabcolsep}{10pt}
\begin{table*}[t]
\centering
\caption{
    \revise{
    \textbf{Ablation study} on the preset timestep pair $(s,t)$ in our proposed \method on the two I2I tasks under different $\lambda$ defined in \cref{eq:s_t}. 
    SSIM is used to evaluate the distance between samples.
    }
}
\label{tab:ablation_s_t}
\vspace{-2pt}
\begin{tabular}{cccc}
\toprule
\revise{Stylization} & \revise{Colorization} & \revise{Segmentation} & \revise{Sketch} \\
\midrule
\revise{$(s,t)=(50, 50)$} & \revise{$(s,t)=(5,5)$} & \revise{$(s,t)=(200, 200)$} & \revise{$(s,t)=(20, 20)$} \\
\bottomrule
\end{tabular}
\end{table*}

\newrevise{
Recall that} \revise{we introduce the shared encoder by diffusing both $x_0$ and $y_0$ with the identical timestep $t$.
\sqq{To} address the trade-off between the strength of content information and domain gap, we propose a strategy to automatically preset an adequate timestep $t^*$ to achieve equilibrium between the distances of $(x_0,x_t)$ and $(x_t,y_t)$.
}
\newrevise{
Therefore, one could reasonably consider to use \yr{(1)} multi-step translation results from \method to facilitate the denoising precess, or \yr{(2)} diffusion processes with distinct timesteps for the source and target domains\yr{,} as a strategy to mitigate trade-offs and achieve improved performance.
In this subsection, we discuss these two interesting alternatives, by fusing the \method results at \yr{multiple} timestep\yr{s} \yr{(\textit{e.g.},} $t$ and $t/2$) (\textit{i.e.}, \cref{fig:method_generalization} (a)), together with \sqq{using} the \textit{asymmetric} timestep pair $(s,t)$ (\textit{i.e.}, \cref{fig:method_generalization} (b)), \sqq{where} $x_0$ and $y_0$ \sqq{are diffused at timesteps} $s$ and $t$, $s\neq t$ respectively.
Given the results analyzed in this section, we conclude that the former multi-step method significantly increases training time cost while degrading the FID performance, and that the latter more complicated pipeline practically coincides with our proposed \method method, since the optimal timestep pair $(s,t)$ appears to be the same.
}

\newrevise{
To implement the multi-step \method, due to the use of the vanilla DDPM, which is only capable of inputting a 3-channel input intermediate noisy image, we train an auxiliary UNet model to fuse the $y_{t/2}$ transformed from $x_{t/2}$ together with the $y'_{t/2}$ denoised from the $y_{t}$.
However, we argue that the additional UNet significantly increases the training cost, while degrading the FID performance, due to additional error from the UNet.
Detailed experimental setups and quantitative comparison are provided in \supp.
}

\newrevise{
As for the asymmetric setting,
}
\revise{
\sqq{we define} the disjoint distribution of the forward Markov process of $(x_{1:s}, y_{1:t})$ as below:
\begin{align}
q(y_{1:t},x_{1:s}|y_0, x_0)& =\prod_{i=1}^s q(x_i|x_{i-1})\prod_{j=1}^t q(y_j|y_{j-1}), \label{eq:3.7} \\
p_{\theta}(y_{0:t},x_{1:s}|x_0)&=p_{\theta}(y_t|x_s)\prod_{i=1}^s q(x_i|x_{i-1})\prod_{j=1}^t q(y_{j-1}|y_j)\sqq{.} \label{eq:3.x}
\end{align}
We first claim the feasibility of this pipeline, whose proofs are addressed in \supp.
Similar to \Cref{lem:1}, \Cref{theorem:1,theorem:2}, we have
\begin{lemma}\label{lem:2}
The negative log-likelihood of $-\log p_{\theta}(y_0|x_0)$ has the following upper bound,
\begin{align}
-\log p_{\theta}(y_0|x_0)\leqslant\mathbb E_q\left[\log\frac{q(y_{1:t},x_{1:s}|y_0, x_0)}{p_{\theta}(y_{0:t},x_{1:s}|x_0)}\right],
\end{align}
where $q=q(y_{1:t},x_{1:s}|y_0, x_0).$
\end{lemma}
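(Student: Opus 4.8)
The plan is to follow exactly the same variational argument that establishes \Cref{lem:1}, since the asymmetric setting only changes the bookkeeping of which indices run up to $s$ and which up to $t$. First I would write $-\log p_{\theta}(y_0|x_0) = -\log \int p_{\theta}(y_{0:t},x_{1:s}|x_0)\,dy_{1:t}\,dx_{1:s}$ and introduce the forward joint $q = q(y_{1:t},x_{1:s}|y_0,x_0)$ from \cref{eq:3.7} as an importance-sampling distribution, multiplying and dividing inside the integral:
\begin{align}
-\log p_{\theta}(y_0|x_0) = -\log \mathbb{E}_q\!\left[\frac{p_{\theta}(y_{0:t},x_{1:s}|x_0)}{q(y_{1:t},x_{1:s}|y_0,x_0)}\right].
\end{align}
Then Jensen's inequality applied to the convex function $-\log$ moves the expectation outside, giving
\begin{align}
-\log p_{\theta}(y_0|x_0) \leqslant \mathbb{E}_q\!\left[-\log\frac{p_{\theta}(y_{0:t},x_{1:s}|x_0)}{q(y_{1:t},x_{1:s}|y_0,x_0)}\right] = \mathbb{E}_q\!\left[\log\frac{q(y_{1:t},x_{1:s}|y_0,x_0)}{p_{\theta}(y_{0:t},x_{1:s}|x_0)}\right],
\end{align}
which is precisely the claimed bound.

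The one point that needs care — and what I expect to be the main (though modest) obstacle — is checking that the two joint densities appearing in the ratio are genuinely densities on the same space, namely the space of $(y_{1:t},x_{1:s})$ given the fixed conditioning pair $(x_0,y_0)$ for $q$ and $x_0$ for $p_{\theta}$. For $q$ this is immediate from \cref{eq:3.7}. For $p_{\theta}$ one must verify that $\int p_{\theta}(y_{0:t},x_{1:s}|x_0)\,dy_{1:t}\,dx_{1:s} = p_{\theta}(y_0|x_0)$, i.e. that \cref{eq:3.x} really integrates (over the latent variables $y_{1:t}$ and $x_{1:s}$) to the marginal we started with; this holds because $p_{\theta}(y_t|x_s)$, the forward kernels $q(x_i|x_{i-1})$, and the denoising kernels $q(y_{j-1}|y_j)$ are all proper conditional densities, so the factorization telescopes correctly. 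In particular the asymmetry $s \neq t$ causes no difficulty: the $x$-chain is simply truncated at length $s$ and fed into the translator at level $s$, while the $y$-chain has length $t$, and the product structure keeps the two sides of the ratio well-matched.

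Finally I would note that, exactly as in \Cref{lem:1}, the gap in the inequality equals $D_{KL}\!\big(q(y_{1:t},x_{1:s}|y_0,x_0)\,\|\,p_{\theta}(y_{1:t},x_{1:s}|y_0,x_0)\big) \geqslant 0$, which both re-derives the bound and shows it is the tightest such variational bound. The remaining claims in the paragraph — the asymmetric analogues of \Cref{theorem:1,theorem:2}, asserting the closed-form KL representation and the optimal mean $\mu_{\theta}(x_s) = \sqrt{\balpha{t}}\,y_0$ — would then follow by the same expansion of the log-ratio into per-step KL terms, collapsing all $x$-only and $y$-only terms into the additive constant $C$ and isolating the single term $D_{KL}(q(y_t|y_0)\|p_{\theta}(y_t|x_s))$; I would defer those routine computations to the \supp as the paper already indicates.
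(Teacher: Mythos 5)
Your proof is correct and in substance identical to the paper's: the paper obtains the same bound by adding the nonnegative gap $D_{KL}\left(q(y_{1:t},x_{1:s}|y_0,x_0)\,\|\,p_{\theta}(y_{1:t},x_{1:s}|y_0,x_0)\right)$ to $-\log p_{\theta}(y_0|x_0)$ and then simplifying via the factorization $p_{\theta}(y_{1:t},x_{1:s}|y_0,x_0)=p_{\theta}(y_{0:t},x_{1:s}|x_0)/p_{\theta}(y_0|x_0)$, which is precisely the KL gap you identify at the end of your Jensen derivation. The two presentations are equivalent, and your extra check that \cref{eq:3.x} integrates over $(y_{1:t},x_{1:s})$ to $p_{\theta}(y_0|x_0)$ is a sound verification that the paper leaves implicit.
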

We accordingly define the $\mathcal L_{VLB}$ as below:
\begin{align}\label{eq:3.8}
\mathcal L_{CE}&=\mathbb -\mathbb E_{q(y_0|x_0)}\left[\log p_{\theta}(y_0|x_0)\right]\\
&\leqslant\mathbb E_{q(y_{0:t},x_{1:s}|x_0)}\left[\log\frac{q(y_{1:t},x_{1:s}|y_0, x_0)}{p_{\theta}(y_{0:t},x_{1:s}|x_0)}\right]:=\mathcal L_{VLB}.
\end{align}
Then we have the re-claimed \Cref{theorem:1}:
\begin{theorem}[Closed-form expression]\label{theorem:3}
The loss function in \cref{eq:3.8} has a closed-form representation.
The training is equivalent to optimizing a KL-divergence up to a non-negative constant, \textit{i.e.},
\begin{align}
\mathcal L_{VLB}=\mathbb E_{q(y_0,x_s|x_0)}\left[D_{KL}(q(y_t|y_0)\|p_{\theta}(y_t|x_s))\right] + C\sqq{.}\label{eq:3.9}
\end{align}
\end{theorem}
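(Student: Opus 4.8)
The plan is to mirror the derivation already carried out for \Cref{theorem:1}, adapting it to the asymmetric timestep pair $(s,t)$. First I would start from the variational bound $\mathcal L_{VLB}$ in \cref{eq:3.8} and expand the ratio inside the expectation using the factorizations \cref{eq:3.7,eq:3.x}. In the numerator $q(y_{1:t},x_{1:s}|y_0,x_0)=\prod_{i=1}^s q(x_i|x_{i-1})\prod_{j=1}^t q(y_j|y_{j-1})$ and in the denominator $p_{\theta}(y_{0:t},x_{1:s}|x_0)=p_{\theta}(y_t|x_s)\prod_{i=1}^s q(x_i|x_{i-1})\prod_{j=1}^t q(y_{j-1}|y_j)$. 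The crucial observation is that the two source-chain products $\prod_{i=1}^s q(x_i|x_{i-1})$ cancel \emph{exactly}, just as in the symmetric case; the asymmetry only affects how far the source chain is run, not its internal structure. What remains is $\mathbb E_q\bigl[\log\frac{\prod_{j=1}^t q(y_j|y_{j-1})}{p_{\theta}(y_t|x_s)\prod_{j=1}^t q(y_{j-1}|y_j)}\bigr]$.

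Next I would handle the $y$-chain exactly as in the original DDPM-style telescoping argument. Rewrite each forward factor $q(y_j|y_{j-1})$ using Bayes' rule as $q(y_j|y_{j-1},y_0)=\frac{q(y_{j-1}|y_j,y_0)\,q(y_j|y_0)}{q(y_{j-1}|y_0)}$, so that the product $\prod_{j=1}^t q(y_j|y_{j-1})$ telescopes into $q(y_t|y_0)\prod_{j=2}^t q(y_{j-1}|y_j,y_0) / q(y_0|\cdot)$-type terms (with the $j=1$ term giving $q(y_0|y_1)$). Pairing these against the pre-trained denoiser factors $q(y_{j-1}|y_j)$ in the denominator, the intermediate posterior-matching terms $D_{KL}(q(y_{j-1}|y_j,y_0)\|q(y_{j-1}|y_j))$ and the reconstruction term $-\log q(y_0|y_1)$ do \emph{not} depend on $\theta$ — they involve only the frozen DDPM — and hence collect into the constant $C$. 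The single $\theta$-dependent term that survives is the one matching $q(y_t|y_0)$ against $p_{\theta}(y_t|x_s)$, namely $\mathbb E_q\bigl[D_{KL}(q(y_t|y_0)\|p_{\theta}(y_t|x_s))\bigr]$. Taking the outer expectation over the marginals $q(y_0,x_s|x_0)$ (all other variables having been integrated out or absorbed into $C$) yields precisely \cref{eq:3.9}.

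I would then note that $C$ is non-negative by the same reasoning as in \Cref{theorem:1}: it is a sum of KL-divergences and a non-negative cross-entropy-type reconstruction term, all independent of the trainable parameters. One should also remark that $s$ appears only through the conditioning variable $x_s$ of $p_{\theta}(y_t|x_s)$ and through the marginal $q(y_0,x_s|x_0)$ over which the outer expectation is taken; since $s$ and $t$ enter the $x$- and $y$-chains separately and the $x$-chain cancels, the asymmetry introduces no new coupling terms and the argument goes through verbatim.

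The main obstacle I anticipate is purely bookkeeping: one must be careful that the telescoping of the $y$-chain is valid when $s\neq t$, i.e.\ that nothing in the source chain length $s$ interferes with the $y$-side manipulations. Since the denominator's $y$-denoising product $\prod_{j=1}^t q(y_{j-1}|y_j)$ runs over the full range $1,\dots,t$ and the forward $y$-product likewise runs $1,\dots,t$, the telescoping is structurally identical to the symmetric proof; the only genuine change is that the $x$-integration is over $x_{1:s}$ rather than $x_{1:t}$, which is harmless because those factors cancel before any integration is needed. Hence the proof is essentially a re-run of \Cref{theorem:1}'s proof with $x_t$ replaced by $x_s$ in the final conditioning, and the detailed computation is deferred to the \supp.
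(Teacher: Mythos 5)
Your proposal is correct and follows essentially the same route as the paper's proof: expand $\mathcal L_{VLB}$ via the factorizations in \cref{eq:3.7,eq:3.x}, cancel the source-chain factors $\prod_{i=1}^s q(x_i|x_{i-1})$ exactly as in the symmetric case, telescope the target chain so that only the term matching $q(y_t|y_0)$ against $p_{\theta}(y_t|x_s)$ depends on $\theta$, and absorb everything else into a non-negative constant. The only difference is the telescoping device: you use the $y_0$-conditioned posterior decomposition in the style of DDPM (so your $C$ is a sum of $\theta$-independent per-step KL terms plus a reconstruction term), whereas the paper applies Bayes' rule to the marginal ratio $q(y_j|y_{j-1})/q(y_{j-1}|y_j)=q(y_j)/q(y_{j-1})$, which collapses the sum in one step to $\log\bigl(q(y_t|y_0)/q(y_0|y_t)\bigr)$ and yields $C=\mathbb E_{q(y_t)}\left[H(q(y_0|y_t))\right]$ directly; the two constants are equivalent.
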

For the given closed-form expression in \cref{eq:3.9}, the optimal $p_{\theta}(y_t|x_s)$ follows a Gaussian distribution and its mean $\mu_{\theta}$ has an analytic form, as summarized in the \Cref{theorem:2} \sqq{above.}
\begin{theorem}[Optimal solution to \cref{eq:3.9}]\label{theorem:4}
The optimal $p_{\theta}(y_t|x_s)$ follows a Gaussian distribution with its mean being
\begin{align}
\mu_{\theta}(x_s) = \sqrt{\balpha{t}}y_0\sqq{.}
\end{align}
\end{theorem}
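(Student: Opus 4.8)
The plan is to mirror the argument used for \Cref{theorem:2}, since \Cref{theorem:4} is the ``asymmetric'' counterpart and the structure is essentially identical once the bookkeeping of the two distinct timesteps $s$ and $t$ is handled carefully. First I would invoke \Cref{theorem:3}: the variational objective reduces, up to the additive constant $C$, to $\mathbb E_{q(y_0,x_s|x_0)}\left[D_{KL}\bigl(q(y_t|y_0)\,\|\,p_{\theta}(y_t|x_s)\bigr)\right]$. The target distribution $q(y_t|y_0)=\mathcal N(y_t;\sqrt{\balpha{t}}y_0,(1-\balpha{t})I)$ is a fixed Gaussian with mean $\sqrt{\balpha{t}}y_0$. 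Since we model $p_{\theta}(y_t|x_s)$ as a Gaussian family with variance $\Sigma_{\theta}=(1-\balpha{t})I$ matching that of $q(y_t|y_0)$, the KL divergence between two Gaussians with equal covariance collapses to a quadratic form in the difference of means, namely $\frac{1}{2(1-\balpha{t})}\|\mu_{\theta}(x_s)-\sqrt{\balpha{t}}y_0\|^2$ plus terms independent of $\theta$.

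Next I would take the expectation over $q(y_0,x_s|x_0)$ and minimize pointwise. Because the only free quantity is $\mu_{\theta}(x_s)$, which is a deterministic function of $x_s$ alone, the optimal choice at each value of $x_s$ is the conditional mean $\mathbb E_{q(y_0|x_s,x_0)}[\sqrt{\balpha{t}}y_0]$. The crucial structural point is that in our construction the shared encoder (cf.\ \cref{eq:3.6}, adapted to the asymmetric case) diffuses $x_0$ to $x_s$ and $y_0$ to $y_t$ using the \emph{same} noise realization, so that conditioning on $x_s$ (equivalently, on the shared latent $z$) pins down $\sqrt{\balpha{t}}y_0$ exactly once $x_0$ is fixed; hence the conditional expectation equals $\sqrt{\balpha{t}}y_0$ itself. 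This yields $\mu_{\theta}(x_s)=\sqrt{\balpha{t}}y_0$, and since the minimized KL is zero the Gaussian form of the optimal $p_{\theta}(y_t|x_s)$ is confirmed. One should also note that the claim is a statement about the optimizer of the population objective: the right-hand side $\sqrt{\balpha{t}}y_0$ is not realizable as a function of $x_s$ alone without knowledge of $y_0$, so the theorem is to be read as characterizing the Bayes-optimal predictor, exactly as in \Cref{theorem:2}; this motivates the reparameterization in \cref{subsec:Analytic}.

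The main obstacle, and the only place where the asymmetric case genuinely differs from \Cref{theorem:2}, is verifying that \Cref{theorem:3} indeed produces the KL term with $q(y_t|y_0)$ (depending on $t$) against $p_{\theta}(y_t|x_s)$ (depending on $s$ through the conditioning variable but on $t$ through the variance), rather than some cross term coupling the two timesteps in a way that breaks the equal-covariance simplification. I would therefore spend most of the effort tracking, in the telescoping of the VLB from \cref{eq:3.8}, which factors of the $x$-chain (length $s$) and the $y$-chain (length $t$) cancel between numerator \cref{eq:3.7} and denominator \cref{eq:3.x}: the entire $\prod_{i=1}^s q(x_i|x_{i-1})$ cancels, the $y$-denoising factors $\prod_{j=1}^t q(y_{j-1}|y_j)$ recombine with $q(y_{1:t}|y_0)$ via Bayes' rule into $q(y_t|y_0)$ plus a constant, and the translator factor $p_{\theta}(y_t|x_s)$ survives alone in the denominator. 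Once that cancellation is in hand the rest is the routine Gaussian KL computation sketched above, and the conclusion $\mu_{\theta}(x_s)=\sqrt{\balpha{t}}y_0$ follows immediately.
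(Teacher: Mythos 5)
Your proposal is correct and follows essentially the same route as the paper's proof: reduce to the KL term from \Cref{theorem:3} and identify the optimal $p_{\theta}(y_t|x_s)$ with the Gaussian $q(y_t|y_0)$, whose mean is $\sqrt{\balpha{t}}y_0$. You are in fact more explicit than the paper, which simply asserts the matching; your remarks on the Bayes-optimal (conditional-expectation) reading of the optimizer and on the telescoping cancellation behind \Cref{theorem:3} fill in details the paper leaves implicit.
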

By applying the diffusion forward process on both $x_0$ and $y_0$ with identical random noise at asymmetric timestep $s$ and $t$, respectively, \sqq{we have the following:}
\begin{align}
x_s=\sqrt{\balpha{s}}x_0+\sqrt{1-\balpha{s}}z,\quad y_t=\sqrt{\balpha{t}}y_0+\sqrt{1-\balpha{t}}z.
\end{align}
\Cref{theorem:4} reveals that $\mu_{\theta}$ needs to approximate the expression $\sqrt{\balpha{t}}y_0$ with $x_s$ as the only available input. Then we apply the following parameterization,
\begin{align}\label{eq:3.10}
\mu_{\theta}(x_s)=f_{\theta}(x_s)-\sqrt{1-\balpha{t}}z,
\end{align}
where $f_{\theta}$ is a trainable function.
The KL-divergence in \cref{eq:3.9} is optimized by minimizing the difference between the two means together with the variance $\Sigma_{\theta}$ of $p_{\theta}(y_t|x_s)$.
Formally, we have the simplified objective:
\begin{align}
\mathcal L_{s,t}=\mathbb E_q\left[\frac{1}{2(1-\balpha{t})}\|f_{\theta}(x_s)-y_t\|^2\right].
\end{align}
}

\revise{
To determine an adequate timestep pair $(s,t)$} \newrevise{for the asymmetric diffusion process, similar to the theoretical analysis about original \method, the complexity of our I2I system is characterized by $C(S)=\max\{C(S_1),C(S_2),C(S_3)\}$.
For I2I with the asymmetric \method, the three sub-systems are (1) the forward diffusion process from $x_0$ to $x_s$ with the complexity $f(s)$, (2) \method from $x_s$ to $y_t$ with the complexity $g(s,t)$, and (3) the denoising process via pre-trained diffusion model from $y_t$ to $y_0$ with the complexity $h(t)$.
$f(s)$ and $h(t)$ are monotone w.r.t. $s$ and $t$, respectively; but $g(s,t)$ does not have to be monotone.
If $s\neq t$, the diffusion process from $x_0$ to $x_s$ and denoising process from $y_t$ to $y_0$ are no longer reciprocal, so we need to consider both $f(s)$ and $h(t)$.
Then the complexity of $C(S)$ can be represented as $C(S)=C(s,t)=\max\{f(s),g(s,t),h(t)\}$.
Our target is to search the timestep pair $(s,t)$ minimizing $\min_{s,t}C(s,t)$.
We have
\begin{align}
\max_{i=1,2,3}d_i&= \max\{\max\{d_1,d_2\},d_3\}\\
&=\max\{\frac{d_1+d_2}{2}+\frac{|d_1-d_2|}{2},d_3\} \\
&\geqslant\max\{\frac{d_1+d_2}{2},d_3\} \\
&\geqslant\frac{1}{3}(2\cdot\frac{d_1+d_2}{2}+d_3)=\frac{1}{3}(d_1+d_2+d_3),
\end{align}
where the equality holds if and only if $|d_1-d_2|=0$ and $\frac{d_1+d_2}{2}=d_3$, \textit{i.e.}, $d_1=d_2=d_3$.
That means $C(s,t)=\max\{f(s),g(s,t),h(t)\}$ reaches its minimum when $s=t$.
In practice, we add the regularity term $\mathrm{SSIM}(x_0,x_s)+\mathrm{SSIM}(x_s,y_t)+\mathrm{SSIM}(y_0,y_t)$ to help search the global minimum.}
\newrevise{
Formally, we calculate the weighted sum of SSIM distances defined below, in which the smaller the result the better the performance.
}
\revise{
\begin{align}\label{eq:s_t}
\mathrm{dist}(s,t)=&|\mathrm{SSIM}(x_0,x_s)-\mathrm{SSIM}(x_s,y_t)|\nonumber\\
&\qquad+|\mathrm{SSIM}(x_s,y_t)-\mathrm{SSIM}(y_0,y_t)|\nonumber\\
&\qquad+|\mathrm{SSIM}(x_0,x_s)-\mathrm{SSIM}(y_0,y_t)|\nonumber\\
&\qquad+\lambda\mathrm{SSIM}(x_0,x_s)\nonumber\\
&\qquad+\lambda\mathrm{SSIM}(x_s,y_t)\nonumber\\
&\qquad+\lambda\mathrm{SSIM}(y_0,y_t)).
\end{align}
By setting the weight $\lambda=0.5$, we acquire an appropriate timestep pair as in \cref{tab:ablation_s_t}.
Notably, the preset timestep pair $(s,t)$ of this generalized pipeline coincide with the original pipeline theoretically and empirically, \textit{i.e.}, the asymmetric timestep pair appears to be identical.
}

\section{Experiments}\label{sec:exp}

In this section, we evaluate the proposed \method \sqq{on} four different I2I tasks: image stylization, colorization, segmentation to image, and sketch to image.
We first show that the \method is capable of mapping translation between the two domains of an I2I task in \cref{subsec:Quality}.
Then\sqq{,} we compare the DMT with \sqq{several} representative methods to demonstrate its superior efficiency and performance in \cref{subsec:Comparison}.
Finally, we provide an ablation study on the effect of the timestep $t$ for training in \cref{subsec:Ablation}.

\subsection{Experimental setups}\label{subsec:Setup}

\noindent\textbf{Datasets and tasks.}
We train the I2I task on four datasets: our handcrafted Portrait dataset using CelebA-HQ by QMUPD~\cite{YiLLR22}, AFHQ~\cite{choi2020stargan}, CelebA-HQ~\cite{karras2018progressive}, and Edges2handbags~\cite{zhu2016generative,xie15hed}.
All the images are resized to $256\times256$ resolution.
Our Portrait dataset consists of 27,000 \sqq{images for training} and 3,000 images for inference; all these images are generated from \sqq{the} CelebA-HQ dataset using \sqq{a} pretrained QMUPD model.
The AFHQ dataset consists of 14,630 images for training and 1,500 images for inference, \sqq{encompassing a variety of cats, dogs, and wild animal images.}
For \sqq{the} CelebA-HQ dataset, we randomly choose 27,000 images together with their segmentation masks as the paired training data, while the remaining 3,000 images are used as test data.
As for Edges2handbags, we use \sqq{all} 138,567 images as training data and the 200-image test data for inference.

\noindent\textbf{Evaluation metrics.} 
We use Fr\'{e}chet Inception Distance (FID)~\cite{heusel2017gans}, Structure Similarity Index Measure (SSIM)~\cite{wang2004image}, LPIPS~\cite{zhang2018unreasonable}, $L1$ and $L2$ metrics to evaluate the fidelity of the generated images and how well the content information is kept after the translation.
Besides, we compare all the methods in a user study, where users were asked to score the image quality from 1 to 5.
We also compare the training and inference efficiency of all the methods by comparing the number of total training epochs, training speed for 1,000 images, and inference time for generating an image.

\noindent\textbf{Baselines.}
We compare our proposed \method algorithm with \revise{five} representative I2I algorithms:  Pix2Pix~\cite{isola2017image}, TSIT~\cite{jiang2020tsit}, SPADE~\cite{park2019SPADE}, QMUPD~\cite{YiLLR22}, and Palette~\cite{saharia2021palette}.
\revise{The alternatives can be divided into two categories: GAN-based and DDPM-based algorithms.}
Pix2Pix is a classic cGAN-based method involving $L_1$ and adversarial loss.
TSIT is a \revise{GAN-based} versatile framework using specially designed normalization layers and coarse-to-fine feature transformation.
SPADE is a \revise{GAN-based} specially-designed framework for semantic image synthesis with spatially-adaptive normalization.
QMUPD \revise{is also GAN-based, which} is specially designed for portrait stylization by unpaired training. \sqq{We train} the model with paired data for fair comparison.
Palette introduces the DDPM~\cite{ho2020denoising} framework into the I2I task and injects the input constraint to each step of \sqq{the} denoising process.

\noindent\textbf{Implementation details.}
We train the proposed \method module on the platform of PyTorch~\cite{paszke2019pytorch}, in a Linux environment with an NVIDIA Tesla A100 GPU.
We set total timestep $T=1000$ for all the experiments, the same setting as in~\cite{ho2020denoising}.
We train the reverse denoising process of the DDPM using a U-Net backbone together with the Transformer sinusoidal embedding~\cite{ronneberger2015u,vaswani2017attention}, following~\cite{dhariwal2021diffusion}.
The DDPM is frozen during the training of the \method module.
To train the \method module, we use the Pix2Pix~\cite{isola2017image} and TSIT~\cite{jiang2020tsit} model.  We remove the discriminator model and train only the generator block to ensure that the translator $f_{\theta}$ has approximately the same functional form as the real mapping.
Note that our \method employs the DDPM denoising process during sampling, which employs hundreds of iterative function evaluations for denoising and can be \sqq{time-consuming}.
Therefore, we apply DDIM~\cite{song2020denoising} for acceleration, which realizes high-quality synthesis within \sqq{10 function evaluations} (NFE = 10).

\setlength{\tabcolsep}{5pt}
\begin{table}[!ht]
\small
\renewcommand\arraystretch{1}
\centering
\vspace{-10pt}
\caption{
    \textbf{Quantitative comparison} between \method and SPADE~\cite{park2019SPADE} on segmentation-to-image task.
    FID, SSIM, LPIPS, L1, and L2 metrics are used to evaluate the image quality and content consistency, respectively.
}
\label{tab:spade}
\vspace{-8pt}
\begin{tabular}{l|cccccc}
\toprule
Method & \textbf{FID$\downarrow$} & \textbf{SSIM$\uparrow$} & \textbf{LPIPS$\downarrow$} & \textbf{L1$\downarrow$} & \textbf{L2$\downarrow$} \\
\midrule
SPADE \revise{(GAN)}    & 66.55 & 0.140 & 0.487  & 0.413  & 0.285 \\
Ours & \bf 36.78 & \bf 0.446 & \bf 0.433  & \bf 0.182  & \bf 0.053 \\
\bottomrule
\end{tabular}
\end{table}
\begin{figure}[!ht]
\vspace{-8pt}
\centering
\includegraphics[width=0.5\textwidth]{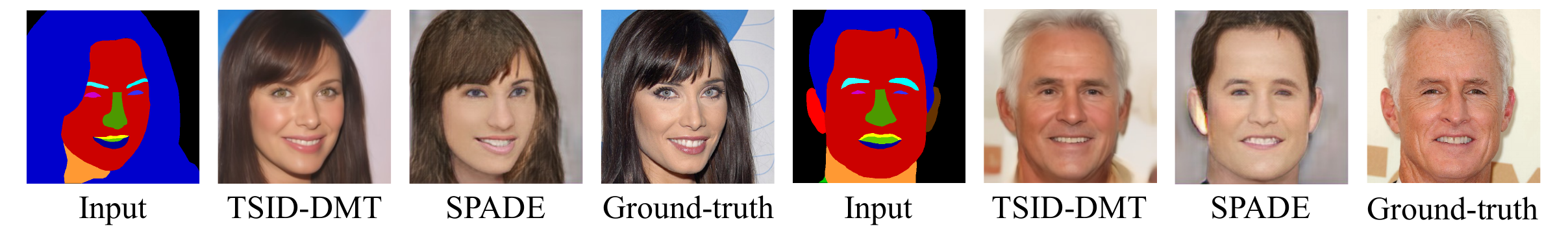}
\vspace{-10pt}
\caption{
    \textbf{Qualitative comparison} between \method and SPADE~\cite{park2019SPADE} on segmentation-to-image task.
    Our proposed \method achieves better image quality and content consisitency compared with SPADE.
}
\label{fig:spade}
\end{figure}

\subsection{Qualitative evaluation on various tasks}\label{subsec:Quality}

\setlength{\tabcolsep}{5pt}
\begin{table}[!ht]
\small
\renewcommand\arraystretch{1}
\centering
\vspace{-10pt}
\caption{
    \textbf{Quantitative comparison} between \method and QMUPD~\cite{YiLLR22} on image stylization task.
    FID, SSIM, LPIPS, L1, and L2 metrics are used to evaluate the image quality and content consistency, respectively.
}
\label{tab:qmupd}
\vspace{-8pt}
\begin{tabular}{l|cccccc}
\toprule
Method & \textbf{FID$\downarrow$} & \textbf{SSIM$\uparrow$} & \textbf{LPIPS$\downarrow$} & \textbf{L1$\downarrow$} & \textbf{L2$\downarrow$} \\
\midrule
QMUPD \revise{(GAN)}    &     12.81 &     0.660 &     0.248 &     0.268 &     0.392 \\
Ours     & \bf 11.01 & \bf 0.760 & \bf 0.138 & \bf 0.131 & \bf 0.101 \\
\bottomrule
\end{tabular}
\end{table}
\begin{figure}[!ht]
\vspace{-8pt}
\centering
\includegraphics[width=0.5\textwidth]{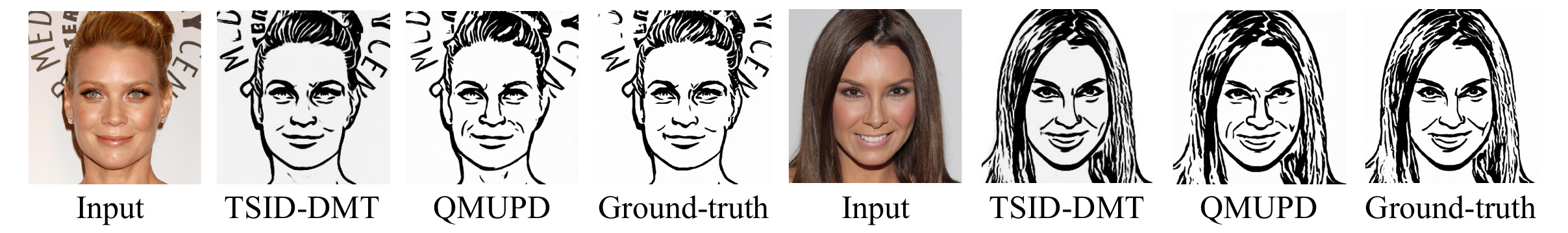}
\vspace{-10pt}
\caption{
    \textbf{Qualitative comparison} between \method and QMUPD~\cite{YiLLR22} on image stylization task.
    Our proposed \method achieves better image quality and content consisitency compared with QMUPD.
}
\label{fig:qmupd}
\end{figure}

\begin{figure*}[t]
\centering
\includegraphics[width=1.0\textwidth]{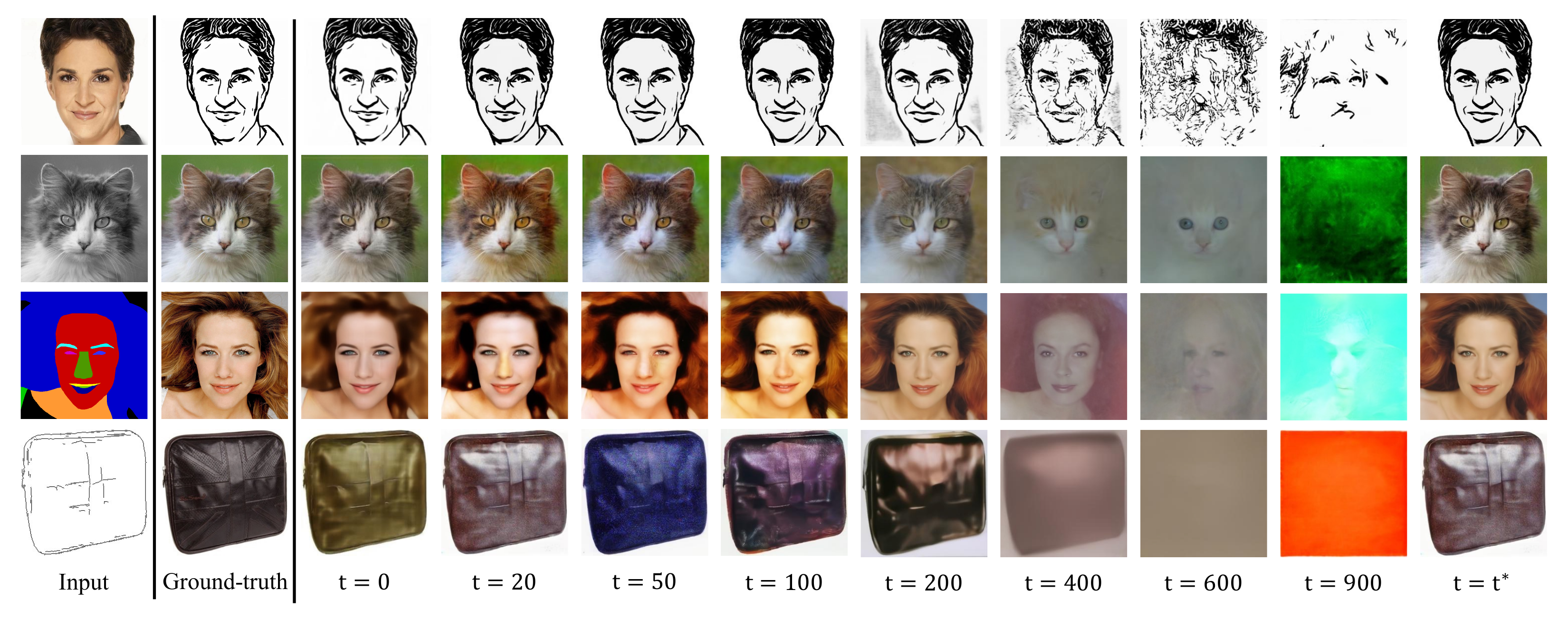}
\vspace{-10pt}
\caption{
    \textbf{Qualitative results for ablation study} of the preset timestep $t$ in our proposed \method on the four I2I tasks.
    \sqq{We observe that a smaller $t$ helps in better retaining the content information from the input source,} but suffers from a larger gap between the target domain and the source domain. 
    The optimally selected timestep ($t^*$) for each of the four I2I tasks is given in \cref{tab:quantitative_ablation}.
}
\label{fig:ablation_t_qualitative}
\vspace{-5pt}
\end{figure*}

\setlength{\tabcolsep}{1pt}
\begin{table*}[!ht]
\footnotesize
\centering
\caption{
    \textbf{Quantitative comparison} between Palette~\cite{saharia2021palette}, Pix2Pix~\cite{isola2017image}, TSIT~\cite{jiang2020tsit}\sqq{,} and our proposed \method.
    FID, SSIM\sqq{,} and LPIPS are used to evaluate the image quality and content preservation, respectively.
    Besides, we introduce the user study ({\bf Score}) to evaluate the quality of the synthesized images.
    We also report the total number of training epochs ({\bf Ep.}), training time for 1,000 images ({\bf Train}), and inference time for a single image ({\bf Infer.}) of each method.
    %
}
\label{tab:quantitative_comparison}
\vspace{-2pt}
\begin{tabular}{l|cccc|cccc|cccc|cccc|ccc}
\toprule
& \multicolumn{4}{c|}{Stylization}& \multicolumn{4}{c|}{Colorization} & \multicolumn{4}{c|}{Segmentation} & \multicolumn{4}{c|}{Sketches} & & & \\
\midrule
Method & \textbf{FID$\downarrow$} & \textbf{SSIM$\uparrow$} & \textbf{LPIPS$\downarrow$} & \textbf{Ep.} & \textbf{FID$\downarrow$} & \textbf{SSIM$\uparrow$} & \textbf{LPIPS$\downarrow$} & \textbf{Ep.} & \textbf{FID$\downarrow$} & \textbf{SSIM$\uparrow$} & \textbf{LPIPS$\downarrow$} & \textbf{Ep.} & \textbf{FID$\downarrow$} & \textbf{SSIM$\uparrow$} & \textbf{LPIPS$\downarrow$} & \textbf{Ep.} & \textbf{Train} & \textbf{Infer.} & \textbf{Score} \\
\midrule
Palette \revise{(DDPM)}                &     17.16 &     0.663 &     0.366 & 2500 &     14.48 &     0.582 &     0.299 & 2450 &     40.77 &     0.092 &     0.521 & 1000 &     74.51 &     0.360 &     0.275 & 215  &  71s & 21.63s &     3.0 \\
\midrule
Pix2Pix \revise{(GAN)}               &     19.14 &     0.630 &     0.260 &   60 &     17.50 & \bf 0.769 &     0.263 &   60 &     70.98 &     0.105 &     0.542 &   60 &     77.80 &     0.524 &     0.306 &   40 &  25s &  0.09s &     1.7 \\
Pix2Pix-\method (Ours) & \bf 10.81 & \bf 0.703 & \bf 0.183 &   60 & \bf 17.44 &     0.752 & \bf 0.263 &   60 & \bf 65.26 & \bf 0.137 & \bf 0.534 &   60 & \bf 76.75 & \bf 0.527 & \bf 0.306 &   40 &  20s &  0.31s & \bf 3.5 \\
\midrule
TSIT \revise{(GAN)}                  &     16.62 &     0.681 &     0.235 &   60 &     13.60 &     0.645 &     0.243 &   60 &     40.59 &     0.357 &     0.450 &   60 &     76.80 &     0.606 &     0.282 &   40 & 134s &  0.11s &     3.6 \\
TSIT-\method (Ours)    & \bf 11.01 & \bf 0.760 & \bf 0.138 &   60 & \bf 13.03 & \bf 0.684 & \bf 0.180 &   60 & \bf 36.78 & \bf 0.446 & \bf 0.433 &   60 & \bf 74.37 & \bf 0.687 & \bf 0.255 &   40 &  82s &  0.48s & \bf 4.4 \\
\bottomrule
\end{tabular}
\end{table*}

The process of inferring images \sqq{with} \method consists of the following three simple steps.
\begin{itemize}
\item[1)] We apply the forward diffusion process to the input image $x_0$ until the pre-selected timestep $t$ to obtain $x_t$, which can be written as $x_t=\sqrt{\balpha{t}}x_0+\sqrt{1-\balpha{t}}z_t$;
\item[2)] By obtaining the mean by the functional approximator $f_{\theta}$ according to \cref{eq:3.5}, we infer the approximated $y_t$ by adding another Gaussian noise;
\item[3)] Using $y_t$ as the intermediate result, sampling with the given pre-trained DDPM by the reverse process achieves the required output.
\end{itemize}

We conducted four experiments to evaluate our proposed \method on four datasets, \textit{i.e.}, our handcrafted Portrait dataset, AFHQ~\cite{choi2020stargan}, CelebA-HQ~\cite{karras2018progressive}, and Edges2handbags~\cite{zhu2016generative,xie15hed}.
In training\sqq{,} we use 40 epochs for the sketch-to-image task, and 60 epochs for the other three tasks.
As shown in \cref{fig:teaser}, our method is capable of learning the cross-domain translation mapping and generates high-quality images.
For example, in the stylization task, the shared encoder is able to distinguish the two different forward diffusion processes of the two domains.
In the other tasks, our method can still extract the \revise{input} feature and generate photo-realistic images with high diversity \sqq{even with little input \revise{condition} information}
More results can be found in Appendix C.

\begin{figure*}[!ht]
\centering
\includegraphics[width=0.98\textwidth]{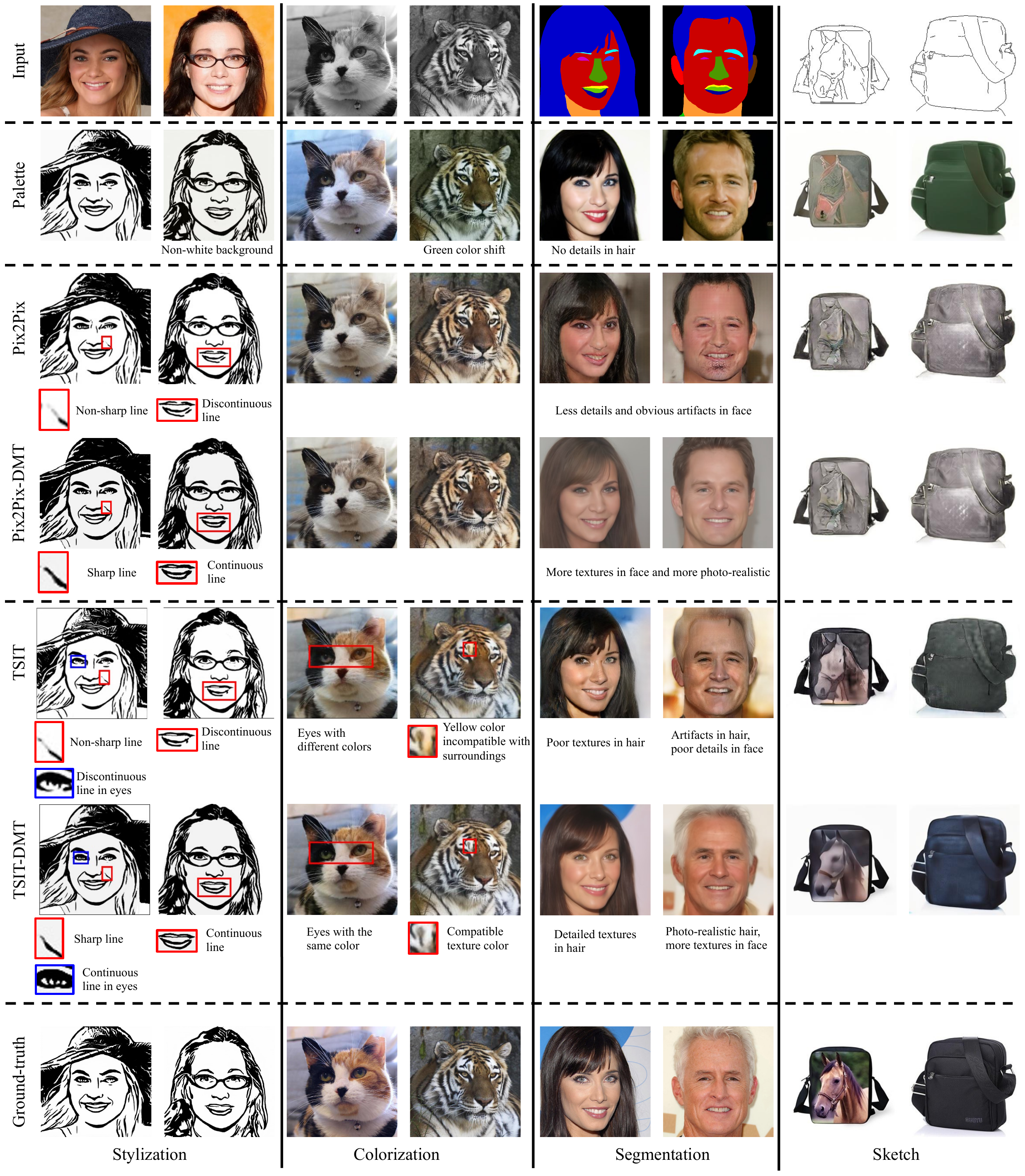}
\vspace{-15pt}
\caption{
    \newrevise{
    \textbf{Qualitative comparison}. Our \method achieves on par or better results than the three baseline methods Pix2Pix~\cite{isola2017image}, Palette~\cite{saharia2021palette}, TSIT~\cite{jiang2020tsit} on the four I2I tasks, \sqq{which are} image stylization, image colorization, segmentation to image, and sketch to image. Significant differences are highlighted in red or blue boxes, and brief textual explanations are provided \yr{besides the boxes}.
    The comparison on efficiency can be found in \cref{tab:quantitative_comparison}.}
    %
}
\label{fig:comparison}
\vspace{-5pt}
\end{figure*}

\subsection{Comparisons}\label{subsec:Comparison}

We qualitatively and quantitatively compare our method with the four classic I2I methods:  Pix2Pix~\cite{isola2017image}, TSIT~\cite{jiang2020tsit}, SPADE~\cite{park2019SPADE}, QMUPD~\cite{YiLLR22}, and the DDPM-based conditional generation method Palette~\cite{saharia2021palette}.
First, we compare with SPADE~\cite{park2019SPADE}.
It requires category-wise segmentation masks, limiting its application \sqq{to} most I2I tasks.
Note that our proposed \method introduces the shared encoder by gradually adding noise onto the original images, which \sqq{corrupts} the semantic information from the category-wise segmentation masks.
Hence\sqq{,} we only compare with SPADE on segmentation-to-image task, without applying the \method on top of it.

\sqq{T}he results are shown in \cref{fig:spade} and \cref{tab:spade}.

We also compare with the specially-designed stylization algorithm QMUPD~\cite{YiLLR22}.
It introduces a quality metric guidance for portrait generation using unpaired training data.
We train QMUPD with paired data for fair comparison, which reduces the training difficulty and achieves a stronger baseline. 
\sqq{The results, presented in \cref{fig:qmupd} and \cref{tab:qmupd}, demonstrate that our approach achieves performance that is on par with, or even surpasses, existing standards.}

Then, we compare with Palette~\cite{saharia2021palette} using the open source implementation\footnote{https://github.com/Janspiry/Palette-Image-to-Image-Diffusion-Models}.
As shown in \cref{fig:comparison}, we observe that the results of Palette fail to extract the segmentation feature of CelebA-HQ and Edges2handbags dataset\sqq{. Consequently, this leads to an inability to accurately generate details in the background of human images or replicate the horse pattern on the bags.}
%
As a comparison, our proposed \method can generate \sqq{high-quality} images \sqq{and preserve} the semantic information of the input \revise{condition}, even \sqq{when} given little input semantic information.

Next\sqq{,} we compare with Pix2Pix~\cite{isola2017image}.
We observe that our method can generate images of much higher quality than the Pix2Pix method.
For instance, the generated images of Pix2Pix suffer from severe artifacts over the facial region in the CelebA-HQ datasets, while our method consistently produces high\sqq{-}quality results.
Moreover, the \sqq{feature extraction performance} is significantly improved by the shared encoder and the well-prepared DDPM model in our method.

We finally compare with TSIT~\cite{jiang2020tsit}.
Although TSIT introduces a coarse-to-fine feature transformation block and hence can synthesize high-quality images in most cases, it fails to produce results with sufficient and satisfying semantics and textures when given very little inference information (\textit{e.g.}, hair and forehead region of segmentation).
In contrast, the results of \method have clear \sqq{boundaries} at the forehead and hair region, together with rich texture.

The quantitative results are reported in \cref{tab:quantitative_comparison}, showing that our method has the best image fidelity (FID), the lowest perceptual loss (LPIPS), and comparable structural similarity (SSIM).
Regarding the training and inference speed, our method uses the smallest number of training epochs and has \sqq{the} fastest training speed for generating 1,000 images, because it only needs to train one translation module.
The \method is also 40x $\sim$ 80x faster than Palette~\cite{saharia2021palette} due to starting the sampling process at an intermediate step (4x $\sim$ 8x faster) and the use of the fast sampling algorithm DDIM ($\sim$10x faster).

\subsection{Ablation study on the timestep for domain translation}\label{subsec:Ablation}

\setlength{\tabcolsep}{12pt}
\setlength{\tabcolsep}{10pt}
\begin{table*}[t]
\centering
\caption{
    \textbf{Ablation study} on the preset timestep $t$ in our proposed \method on the four I2I tasks.
    FID and SSIM are used to evaluate the image quality and content preservation, respectively.
    %
    %
    \sqq{Notably,} the model trained on the timestep $t^*$, which is automatically selected by our strategy in \cref{subsec:Optimal}, achieves satisfactory performance for all the four tasks.
}
\label{tab:quantitative_ablation}
\vspace{-2pt}
\begin{tabular}{l|cc|cc|cc|cc}
\toprule
& \multicolumn{2}{c|}{Stylization} & \multicolumn{2}{c|}{Colorization} & \multicolumn{2}{c|}{Segmentation} & \multicolumn{2}{c}{Sketches} \\
& \multicolumn{2}{c|}{($t^*=50$)} & \multicolumn{2}{c|}{($t^*=5$)} & \multicolumn{2}{c|}{($t^*=200$)} & \multicolumn{2}{c}{($t^*=20$)} \\
\midrule
Method & \textbf{FID$\downarrow$} & \textbf{SSIM$\uparrow$} & \textbf{FID$\downarrow$} & \textbf{SSIM$\uparrow$} & \textbf{FID$\downarrow$} & \textbf{SSIM$\uparrow$} & \textbf{FID$\downarrow$} & \textbf{SSIM$\uparrow$} \\
\midrule
TSIT-\method ($t=0$)   &         22.38  & \textbf{0.827} &         13.28  & \textbf{0.708} &         59.51  & \textbf{0.522} &         76.95  & \textbf{0.729} \\
TSIT-\method ($t=20$)  & \textbf{11.01} &         0.760  &         13.90  &         0.568  &         47.00  &         0.486  &         74.37  &         0.687  \\
TSIT-\method ($t=50$)  &         20.46  &         0.732  &         15.18  &         0.496  &         42.00  &         0.473  &         78.03  &         0.629  \\
TSIT-\method ($t=100$)  &        39.13  &         0.674  &         16.38  &         0.394  &         37.22  &         0.460  &         80.81  &         0.668  \\
TSIT-\method ($t=200$) &         80.55  &         0.518  &         18.82  &         0.249  &         36.78  &         0.446  &         88.54  &         0.629  \\
TSIT-\method ($t=400$) &        110.97  &         0.301  &        114.08  &         0.085  &         50.79  &         0.251  &        126.56  &         0.307  \\
TSIT-\method ($t=600$) &        254.44  &         0.177  &        216.62  &         0.019  &        158.69  &         0.050  &        338.89  &         0.084  \\
TSIT-\method ($t=900$) &        301.77  &         0.051  &        337.37  &         0.028  &        213.81  &         0.000  &        371.87  &         0.004  \\
TSIT-\method ($t=t^*$) &         20.46  &         0.732  & \textbf{13.03} &         0.684  & \textbf{36.78} &         0.446  & \textbf{74.37} &         0.687  \\
\bottomrule
\end{tabular}
\end{table*}

\begin{table*}[t]
\begin{minipage}[t]{0.44\textwidth}
\scriptsize
\setlength{\tabcolsep}{10pt}
\caption{
    \revise{
    \textbf{Ablation study} of $t$ near $t^*$ on AFHQ dataset.
    }
}
\label{tab:ablation_t_star_afhq}
\vspace{-10pt}
\begin{tabular}{l|ccccc}
\toprule
\revise{$t$} & \revise{0} & \revise{5 ($t^*$)} & \revise{10} & \revise{15} & \revise{25} \\
\midrule
\revise{\bf FID$\downarrow$} & \revise{13.28} & \revise{\textbf{13.03}} & \revise{13.61}  & \revise{13.92}  & \revise{14.62}  \\
\revise{\bf SSIM$\uparrow$}  & \revise{\textbf{0.708}} & \revise{0.684} & \revise{0.680}  & \revise{0.620}  & \revise{0.537}  \\
\bottomrule
\end{tabular}
\end{minipage}
\hfill
\begin{minipage}[t]{0.55\textwidth}
\scriptsize
\setlength{\tabcolsep}{8pt}
\caption{
    \revise{
    \textbf{Ablation study} of $t$ near $t^*$ on CelebA-HQ dataset.
    }
}
\label{tab:ablation_t_star_celebahq}
\vspace{-10pt}
\begin{tabular}{l|ccccccc}
\toprule
\revise{$t$}                 &   \revise{180} &   \revise{190} &       \revise{195} & \revise{200 ($t^*$)} &   \revise{205} &   \revise{210} &       \revise{220} \\
\midrule
\revise{\bf FID$\downarrow$} & \revise{37.93} & \revise{38.48} &     \revise{36.46} &       \revise{36.78} & \revise{37.09} & \revise{39.62} & \revise{\bf 36.04} \\
\revise{\bf SSIM$\uparrow$}  & \revise{0.450} & \revise{0.438} & \revise{\bf 0.455} &       \revise{0.446} & \revise{0.420} & \revise{0.418} &     \revise{0.432} \\
\bottomrule
\end{tabular}
\end{minipage}
\end{table*}

\sqq{In} the \method algorithm\sqq{,} we first gradually add noise \sqq{for both $x_0$ and $y_0$ using a shared decoder} until some preset timestep $t$.
Here, the timestep $t$ plays a critical role \sqq{in} the performance of the translator $f_{\theta}$ as well as the quality of the generated images.
As discussed in \cref{subsec:Optimal}, we proposed a simple method to determine an adequate timestep before training, denoted by $t=t^*$, by pre-computing the distance between $(x_0, x_t)$ and between $(x_t, y_t)$.
In this section, we compare the generation quality using different timesteps $t$ and show that the timestep $t^*$ selected using our method in \cref{subsec:Optimal} offers the optimal performance.

\sqq{In} \cref{fig:ablation_t_qualitative}, \sqq{we} observe that:
(1) As the translation timestep $t$ increases, the input \revise{condition} provides weaker constraint to the output generation.
For instance, the face poses of the results in row 1 and row 3 begin to change in an unwarranted way when $t>400$;
(2) When the translation timestep $t$ is small, the translation mapping can hardly approximate the real distribution (\newrevise{\textit{e.g.}}, the hair texture of the segmentation to image task in row 3, column 3).

We also present quantitative comparison results in \cref{tab:quantitative_ablation}, from which we see the trade-off between the strength of the input \revise{condition} and the difficulty \sqq{of learning} the translation mapping.
\sqq{Significantly, our} method for selecting an appropriate timestep achieves performance comparable to using the optimal $t$ shown in \cref{tab:quantitative_ablation}. This confirms the effectiveness of our simple selection strategy.

\revise{
We conduct further ablation study on the performance of timestep $t$ near the preset timestep $t^*$, \sqq{in order} to \sqq{demonstrate} the strong robustness of our strategy.
As shown in \cref{tab:ablation_t_star_afhq,tab:ablation_t_star_celebahq}, despite the significant performance drop when using different timesteps, our strategy is still able to search an adequate timestep for \method.
}

\subsection{Limitations}\label{subsec:Discussion}

Our \method method has several limitations \sqq{that are} interesting avenues for future research.
First, our algorithm is based on the assumption that both the forward and the reverse process satisfy the Markovian property, but this assumption holds only for the DDPM or its extension. 
Second, the \method is designed to train with paired data due to its reliance on using  Pix2Pix~\cite{isola2017image} or TSIT~\cite{jiang2020tsit} module as the translation mapping $f_{\theta}$.
Hence, our method cannot be applied to unpaired training data and related I2I tasks.
Third, our \method is not applicable to tasks whose condition (source domain) and the target domain are almost identical. \sqq{We briefly explain this limitation next.}
Following \cref{eq:3.3}, when $x_0$ equals $y_0$, we have $q(y_0|x_0) = \delta_{x_0}(y_0)$, which is the Dirac distribution.
Then\sqq{,} \cref{eq:3.3} becomes
\begin{align}
\mathcal L_{CE}=\log p_\theta(x_0|x_0) = 0,
\end{align}
which is a constant independent of the model parameter $\theta$. Therefore\sqq{,} the model cannot be optimized.
%
%
%


\section{Conclusion}\label{sec:conclusion}

In this paper, we propose an efficient diffusion model translator, which bridges a well-prepared DDPM and the input inference.
We provide theoretical proof to show the feasibility \sqq{of using} \sqq{this} simple module to accomplish the popular I2I task.
By using our proposed practical method to pre-select an adequate timestep and applying the forward diffusion process until this timestep, we formulate the task as the learning process of a translation mapping, without relying on any retraining of the given DDPM.
We conduct comprehensive experiments to show the high efficiency and the \sqq{outstanding performance} of our proposed algorithm.

\section*{Acknowledgements}

This work was supported by National Natural Science Foundation of China (U2336214, 62332019, 62302297), Beijing Natural Science Foundation (L222008), Shanghai Sailing Program (22YF1420300), Young Elite Scientists Sponsorship Program by CAST (2022QNRC001).

{\small
\bibliographystyle{IEEEtran}
\bibliography{ref}
}
\begin{IEEEbiography}[{\includegraphics[width=1in,height=1.25in,clip,keepaspectratio]{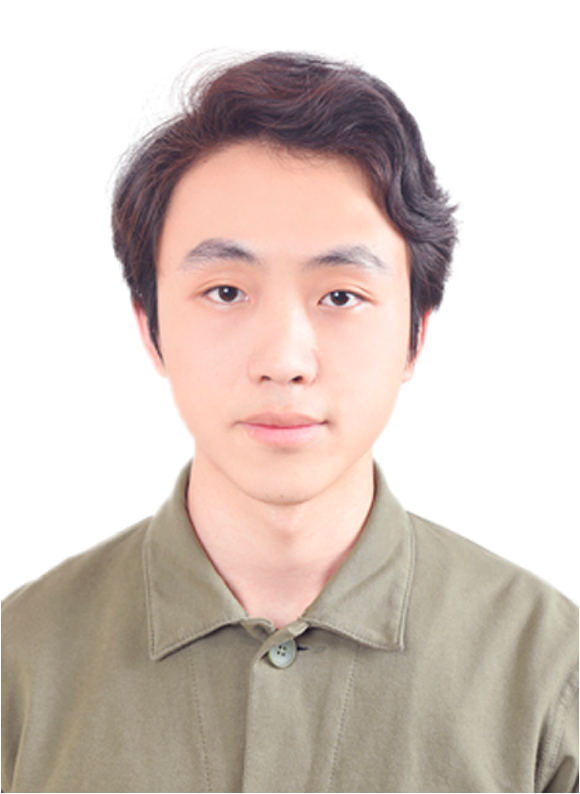}}]{Mengfei Xia}
received the B.S. degree in 2020 from the Department of Mathematical Science, Tsinghua University, Beijing, China. He is currently a fourth-year Ph.D. student at the Department of Computer Science and Technology, Tsinghua University. His
research interests include mathematical foundation in
deep learning, image processing, and computer vision. He was the recipient of the Silver Medal twice in 30th and 31st National Mathematical Olympiad of China.
\end{IEEEbiography}


\begin{IEEEbiography}[{\includegraphics[width=1in,height=1.25in,clip,keepaspectratio]{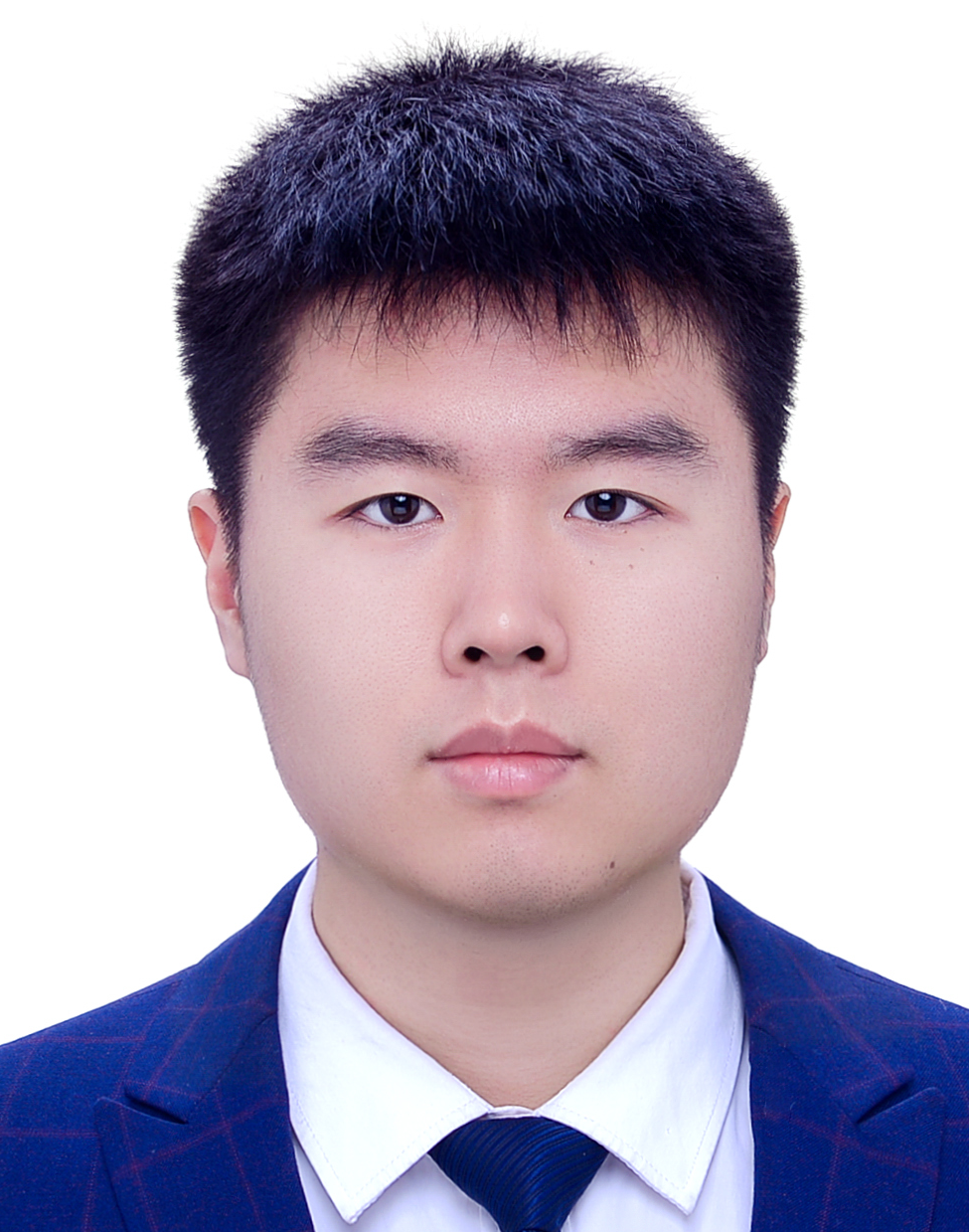}}]{Yu Zhou}
is a fourth-year undergraduate student with Zhili College, Tsinghua University, China. His research interests include deep learning and computer vision. He was the recipient of the Silver Medal twice in 35th and 36th National Olympiad in Informatics of China.
\end{IEEEbiography}


\begin{IEEEbiography}[{\includegraphics[width=1in,height=1.25in,clip,keepaspectratio]{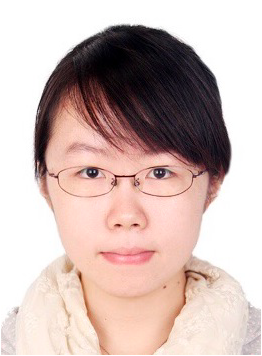}}]{Ran Yi}
is an assistant professor with the Department of Computer Science and Engineering, Shanghai Jiao Tong University. She received the BEng degree and the PhD degree from Tsinghua University, China, in 2016 and 2021. Her research interests include computer vision, computer graphics and computational geometry.
\end{IEEEbiography}


\begin{IEEEbiography}[{\includegraphics[width=1in,height=1.25in,clip,keepaspectratio]{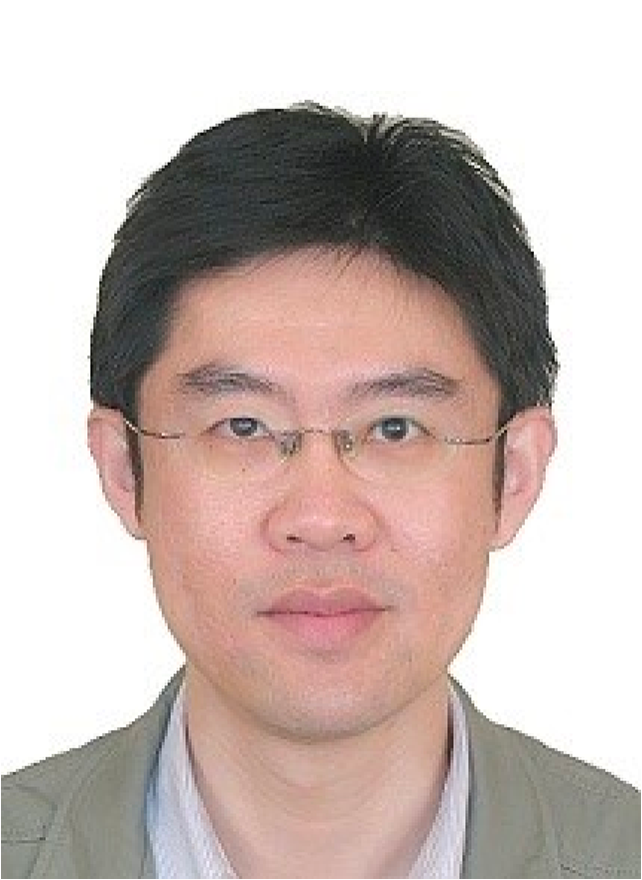}}]{Yong-Jin Liu}
is a professor with the Department of Computer Science and Technology, Tsinghua University, China. He received the BEng degree from Tianjin University, China, in 1998, and the PhD degree from the Hong Kong University of Science and Technology, Hong Kong, China, in 2004. His research interests include computer vision, computer graphics and computer-aided design. For more information, visit 
\href{http://cg.cs.tsinghua.edu.cn/people/~Yongjin/Yongjin.htm}{http://cg.cs.tsinghua.edu.cn/ people/$\sim$Yongjin/Yongjin.htm}.
\end{IEEEbiography}


\begin{IEEEbiography}[{\includegraphics[width=1in,height=1.25in,clip,keepaspectratio]{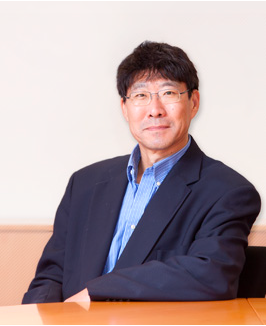}}]{Wenping Wang}
(Fellow, IEEE) received the Ph.D. degree in computer science from the University of Alberta in 1992. He is a Professor of computer science at Texas A\&M University. His research interests include computer graphics, computer visualization, computer vision, robotics, medical image processing, and geometric computing. He is or has been an journal associate editor of ACM Transactions on Graphics, IEEE Transactions on Visualization and Computer Graphics, Computer Aided Geometric Design, and Computer Graphics Forum (CGF). He has chaired a number of international conferences, including Pacific Graphics, ACM Symposium on Physical and Solid Modeling (SPM), SIGGRAPH and SIGGRAPH Asia. Prof. Wang received the John Gregory Memorial Award for his contributions to geometric modeling. He is an IEEE Fellow and an ACM Fellow.
\end{IEEEbiography}

\clearpage

\setcounter{theorem}{0}
\setcounter{lemma}{0}

\appendices

\section{Derivatives of training and inference processes}

\subsection{Algorithms of training and inference processes}

In this part, we provide the algorithms of training and inference processes.
\sqq{Notably,} both training and inference procedures in \Cref{alg:paired_train,alg:paired_infer} resemble the corresponding processes of DDPM respectively, where $\epsilon_{\phi}$ is the pre-trained DDPM with parameter $\phi$, and $\sigma_i$ is the variance of the distribution $\epsilon_{\phi}(y_{i-1}|y_i)$.
The training learns to transfer between the intermediate diffusion results $x_t$ and $y_t$ while DDPM approximator intends to predict the noise $\epsilon$ from $x_t$.

\revise{
Furthermore, we provide the algorithms \sqq{for the} training and inference process\sqq{es} of the generalized asymmetric pipelines in \Cref{alg:paired_train_asymm,alg:paired_infer_asymm}.
}

\begin{algorithm}[H]
\setstretch{1.2}
\caption{Training}\label{alg:paired_train}
\begin{algorithmic}[1]
\Repeat
\State $x_0\sim q(x_0),y_0\sim q(y_0|x_0)$
\State $z_t\sim\mathcal N(0, I)$
\State $x_t\leftarrow \sqrt{\balpha{t}}x_0+\sqrt{1-\balpha{t}}z_t$
\State $y_t\leftarrow \sqrt{\balpha{t}}y_0+\sqrt{1-\balpha{t}}z_t$
\State Take gradient descent step on
$$\nabla_{\theta}\|f_{\theta}(x_t)-y_t\|^2$$
\Until converged
\end{algorithmic}
\end{algorithm}

\begin{algorithm}[H]
\setstretch{1.2}
\caption{Inference}\label{alg:paired_infer}
\begin{algorithmic}[1]
\State $x_0\sim q(x_0)$
\State $z_t,z\sim\mathcal N(0, I)$
\State $x_t\leftarrow \sqrt{\balpha{t}}x_0+\sqrt{1-\balpha{t}}z_t$
\State $y_t\leftarrow f_{\theta}(x_t)-\sqrt{1-\balpha{t}}z_t+\sqrt{1-\balpha{t}}z$
\For {$i=t,t-1,\cdots,1$}
\State $\epsilon_i\sim\mathcal N(0, I)$ if $i>1$, else $\epsilon_i=0$
\State $y_{i-1}=\frac{(y_i-\frac{1-\alpha_i}{\sqrt{1-\bar\alpha_i}}\epsilon_{\phi}(y_i,i))}{\sqrt{\alpha_i}}+\sigma_i\epsilon_i$
\EndFor
\State \Return $y_0$
\end{algorithmic}
\end{algorithm}

\revise{
\begin{algorithm}[H]
\setstretch{1.2}
\caption{Training of the generalized asymmetric pipeline}\label{alg:paired_train_asymm}
\begin{algorithmic}[1]
\Repeat
\State $x_0\sim q(x_0),y_0\sim q(y_0|x_0)$
\State $z\sim\mathcal N(0, I)$
\State $x_s\leftarrow \sqrt{\balpha{s}}x_0+\sqrt{1-\balpha{s}}z$
\State $y_t\leftarrow \sqrt{\balpha{t}}y_0+\sqrt{1-\balpha{t}}z$
\State Take gradient descent step on
$$\nabla_{\theta}\|f_{\theta}(x_s)-y_t\|^2$$
\Until converged
\end{algorithmic}
\end{algorithm}
}

\revise{
\begin{algorithm}[H]
\setstretch{1.2}
\caption{Inference of the generalized asymmetric pipeline}\label{alg:paired_infer_asymm}
\begin{algorithmic}[1]
\State $x_0\sim q(x_0)$
\State $z_1,z_2\sim\mathcal N(0, I)$
\State $x_s\leftarrow \sqrt{\balpha{s}}x_0+\sqrt{1-\balpha{s}}z_1$
\State $y_t\leftarrow f_{\theta}(x_s)-\sqrt{1-\balpha{t}}z_1+\sqrt{1-\balpha{t}}z_2$
\For {$i=t,t-1,\cdots,1$}
\State $\epsilon_i\sim\mathcal N(0, I)$ if $i>1$, else $\epsilon_i=0$
\State $y_{i-1}=\frac{(y_i-\frac{1-\alpha_i}{\sqrt{1-\bar\alpha_i}}\epsilon_{\phi}(y_i,i))}{\sqrt{\alpha_i}}+\sigma_i\epsilon_i$
\EndFor
\State \Return $y_0$
\end{algorithmic}
\end{algorithm}
}

\begin{table*}[ht]
\begin{tabular}{cc}
\begin{minipage}[t]{0.48\textwidth}
\begin{algorithm}[H]
\setstretch{1.15}
\caption{Pseudo-code of \method in a PyTorch-like style.}
\label{alg:code}
\begin{lstlisting}[language=python]
import torch

def forward_step(x_0, y_0, t, T):
    """Defines the forward process of one training step.
    
    Args:
        x_0: Source inputs, with shape [B, C, H, W].
        y_0: Target outputs, with shape [B, C, H, W].
        t: The preset timestep to perform distribution shift.
        T: The translator module to learn.
    """
    # Compute the cumulated variance until timestep t.
    bar_alpha_t = cum_var(t)
    
    # Adding noise (i.e., diffusion) to images from both domains.
    z_t = torch.randn_like(x_0)
    x_t = torch.sqrt(bar_alpha_t) * x_0 + torch.sqrt(1 - bar_alpha_t) * z_t
    y_t = torch.sqrt(bar_alpha_t) * y_0 + torch.sqrt(1 - bar_alpha_t) * z_t
    
    # Learn the translator.
    loss = (T(x_t) - y_t).square().mean()
    
    return loss
\end{lstlisting}
\end{algorithm}
\end{minipage}
&
\begin{minipage}[t]{0.48\textwidth}
\renewcommand\arraystretch{1.0}
\begin{algorithm}[H]
\caption{\revise{Pseudo-code of \method in a PyTorch-like style.}}
\label{alg:code_asymm}
\begin{lstlisting}[language=python]
import torch

def forward_step(x_0, y_0, s, t, T):
    """Defines the forward process of one training step.
    
    Args:
        x_0: Source inputs, with shape [B, C, H, W].
        y_0: Target outputs, with shape [B, C, H, W].
        s: The preset timestep to perform distribution shift for x_0.
        t: The preset timestep to perform distribution shift for y_0.
        T: The translator module to learn.
    """
    # Compute the cumulated variance until timestep s and t.
    bar_alpha_s = cum_var(s)
    bar_alpha_t = cum_var(t)
    
    # Adding noise (i.e., diffusion) to images from both domains.
    z = torch.randn_like(x_0)
    x_s = torch.sqrt(bar_alpha_s) * x_0 + torch.sqrt(1 - bar_alpha_s) * z
    y_t = torch.sqrt(bar_alpha_t) * y_0 + torch.sqrt(1 - bar_alpha_t) * z
    
    # Learn the translator.
    loss = (T(x_s) - y_t).square().mean()
    
    return loss
\end{lstlisting}
\end{algorithm}
\end{minipage}
\end{tabular}
\end{table*}

\subsection{Pseudo-code of training process}
Our proposed diffusion model translator (\method) achieves image-to-image translation (I2I) based on a pre-trained DDPM via simply learning a distribution shift at a certain diffusion timestep.
Accordingly, it owns a \textit{highly efficient} implementation, which is even \textit{independent} of the DDPM itself.
In this part, we provide the pseudo-code of the training process in \cref{alg:code,alg:code_asymm}.
%

\section{Proofs of main results}

\begin{lemma}
We have an upper bound of the negative log-likelihood of $-\log p_{\theta}(y_0|x_0)$ by
\begin{align}
-\log p_{\theta}(y_0|x_0)\leqslant\mathbb E_q\left[\log\frac{q(y_{1:t},x_{1:t}|y_0, x_0)}{p_{\theta}(y_{0:t},x_{1:t}|x_0)}\right],
\end{align}
where $q=q(y_{1:t},x_{1:t}|y_0, x_0).$
\end{lemma}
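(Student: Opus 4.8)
The plan is to run the standard variational (ELBO) argument, exactly as in the original DDPM derivation but carried out over the joint chain $(x_{1:t},y_{1:t})$. First I would invoke the marginalization identity already recorded above, $p_{\theta}(y_0|x_0)=\int p_{\theta}(y_{0:t},x_{1:t}|x_0)\,dy_{1:t}\,dx_{1:t}$, and turn the integral into an expectation under $q:=q(y_{1:t},x_{1:t}|y_0,x_0)$ by multiplying and dividing the integrand by $q$, which yields $p_{\theta}(y_0|x_0)=\mathbb E_{q}\!\left[p_{\theta}(y_{0:t},x_{1:t}|x_0)/q(y_{1:t},x_{1:t}|y_0,x_0)\right]$. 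This importance-sampling rewriting is legitimate because every factor of $q$ in \cref{eq:3.1} and of $p_{\theta}$ in \cref{eq:3.2} is a non-degenerate Gaussian, so $q$ is strictly positive wherever the integrand is supported and the ratio is well defined almost surely.

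Next I would take $-\log$ of both sides and apply Jensen's inequality to the convex map $u\mapsto-\log u$, giving $-\log\mathbb E_{q}[\,\cdot\,]\le\mathbb E_{q}[-\log(\,\cdot\,)]$; moving the minus sign inside the logarithm flips the ratio and produces precisely $-\log p_{\theta}(y_0|x_0)\le\mathbb E_{q}\!\left[\log\frac{q(y_{1:t},x_{1:t}|y_0,x_0)}{p_{\theta}(y_{0:t},x_{1:t}|x_0)}\right]$, which is the asserted bound. Equivalently, I could observe that the slack in the inequality is exactly $D_{KL}\!\big(q(y_{1:t},x_{1:t}|y_0,x_0)\,\big\|\,p_{\theta}(y_{1:t},x_{1:t}|y_0,x_0)\big)\ge0$, where $p_{\theta}(y_{1:t},x_{1:t}|y_0,x_0):=p_{\theta}(y_{0:t},x_{1:t}|x_0)/p_{\theta}(y_0|x_0)$ is the posterior of the generative chain; this makes both the inequality and its equality condition transparent, and is the form I would actually write down.

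I do not expect a genuine obstacle in this lemma: the only points needing care are the positivity/measurability conditions that license Jensen (supplied, as noted, by the Gaussian structure of all transition kernels in \cref{eq:3.1} and \cref{eq:3.2}) and the bookkeeping that the outer expectation in the statement is taken with respect to the very same $q=q(y_{1:t},x_{1:t}|y_0,x_0)$ used in the importance-sampling step. The substantive work—decomposing this variational bound into a single tractable KL term against the pre-trained DDPM plus an additive constant—is what \cref{theorem:1} will carry out, so here I would deliberately stop at the two-line Jensen argument above.
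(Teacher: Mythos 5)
Your proposal is correct and, in the KL-slack form you say you would actually write down, is exactly the paper's proof: the paper bounds $-\log p_{\theta}(y_0|x_0)$ by adding $D_{KL}\bigl(q(y_{1:t},x_{1:t}|y_0,x_0)\,\|\,p_{\theta}(y_{1:t},x_{1:t}|y_0,x_0)\bigr)\geqslant 0$ with the posterior factored as $p_{\theta}(y_{0:t},x_{1:t}|x_0)/p_{\theta}(y_0|x_0)$, so the constant cancels. Your Jensen/importance-sampling presentation is the standard equivalent derivation of the same bound, so nothing further is needed.
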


\begin{proof}
\begin{align}
&-\log p_{\theta}(y_0|x_0) \\
\leqslant& -\log p_{\theta}(y_0|x_0) + \nonumber \\
& \qquad D_{KL}\left(q(y_{1:t},x_{1:t}|y_0, x_0) \| p_{\theta}(y_{1:t},x_{1:t}|y_0, x_0)\right) \\
=& -\log p_{\theta}(y_0|x_0)+\nonumber  \\
& \qquad \mathbb E_{q(y_{1:t},x_{1:t}|y_0, x_0)}\left[\log\frac{q(y_{1:t},x_{1:t}|y_0, x_0)}{p_{\theta}(y_{1:t},x_{1:t}|y_0, x_0)}\right] \\
=& -\log p_{\theta}(y_0|x_0)+ \nonumber \\
& \qquad \mathbb E_{q(y_{1:t},x_{1:t}|y_0, x_0)}\left[\log\frac{q(y_{1:t},x_{1:t}|y_0, x_0)}{p_{\theta}(y_{0:t},x_{1:t}|x_0)/p_{\theta}(y_0|x_0)}\right] \\
=& \mathbb E_{q(y_{1:t},x_{1:t}|y_0, x_0)}\left[\log\frac{q(y_{1:t},x_{1:t}|y_0, x_0)}{p_{\theta}(y_{0:t},x_{1:t}|x_0)}\right].
\end{align}
\end{proof}

\begin{theorem}[Closed-form expression]
The loss function in Equation (13) in the main paper has a closed-form representation.
The training is equivalent to optimizing a KL-divergence up to a non-negative constant, \textit{i.e.},
\begin{align}
\mathcal L_{VLB}=\mathbb E_{q(y_0,x_t|x_0)}\left[D_{KL}(q(y_t|y_0)\|p_{\theta}(y_t|x_t))\right] + C\sqq{.}
\end{align}
\end{theorem}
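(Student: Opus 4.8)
The plan is to start from the variational bound supplied by \Cref{lem:1}, namely $\mathcal L_{VLB}=\mathbb E_q\!\left[\log\big(q(y_{1:t},x_{1:t}|y_0,x_0)/p_\theta(y_{0:t},x_{1:t}|x_0)\big)\right]$, and to substitute the two factorizations \cref{eq:3.1} and \cref{eq:3.2}. The first observation is that the product $\prod_{i=1}^t q(x_i|x_{i-1})$ appears identically in the numerator and the denominator and therefore cancels; what remains inside the logarithm is $\log\!\big(\prod_{j=1}^t q(y_j|y_{j-1})\big)-\log\!\big(\prod_{j=1}^t q(y_{j-1}|y_j)\big)-\log p_\theta(y_t|x_t)$, where the first factor is the forward $y$-chain and the second is the frozen pre-trained DDPM denoising chain. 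Since neither of these depends on $\theta$, I would group their expectation into a single constant $C$ (following the usual DDPM reorganization via $q(y_j|y_{j-1})=q(y_j|y_{j-1},y_0)$ and telescoping, so that $C$ becomes a sum of KL divergences between fixed distributions together with entropy terms).

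The heart of the argument is the treatment of the only $\theta$-dependent term, $-\mathbb E_q[\log p_\theta(y_t|x_t)]$. The key structural fact, read off directly from the product form of \cref{eq:3.1}, is that conditioned on $(x_0,y_0)$ the $x$-chain and the $y$-chain are independent, and moreover $y_t$ depends only on $y_0$ while $x_t$ depends only on $x_0$; hence $q(y_0,y_t,x_t|x_0)=q(y_0|x_0)\,q(x_t|x_0)\,q(y_t|y_0)$. Integrating out all the remaining latents therefore gives $-\mathbb E_q[\log p_\theta(y_t|x_t)]=\mathbb E_{q(y_0|x_0)}\mathbb E_{q(x_t|x_0)}\!\big[-\mathbb E_{q(y_t|y_0)}[\log p_\theta(y_t|x_t)]\big]$, i.e.\ the cross term is averaged against the simple Gaussian $q(y_t|y_0)$ rather than against an intractable coupled distribution.

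With $y_t$ drawn from $q(y_t|y_0)$, I would then invoke the elementary identity $-\mathbb E_{q(y_t|y_0)}[\log p_\theta(y_t|x_t)]=D_{KL}\big(q(y_t|y_0)\,\|\,p_\theta(y_t|x_t)\big)+H\big(q(y_t|y_0)\big)$, where $H$ is the differential entropy. Because $q(y_t|y_0)=\mathcal N(\sqrt{\balpha t}\,y_0,(1-\balpha t)I)$ has a covariance that depends neither on $\theta$ nor on $y_0$, the entropy $H(q(y_t|y_0))$ is a constant and can be folded into $C$. Collecting the pieces gives exactly \cref{eq:3.4}, $\mathcal L_{VLB}=\mathbb E_{q(y_0,x_t|x_0)}\big[D_{KL}(q(y_t|y_0)\|p_\theta(y_t|x_t))\big]+C$.

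The step I expect to be the main obstacle is the bookkeeping of the constant $C$: one has to verify that every term swept into it is genuinely free of $\theta$ (easy here, since $p_\theta(y_t|x_t)$ is the sole trainable object) and, to justify the stated non-negativity, that the forward and denoising contributions reorganize into KL divergences plus entropies, mirroring the $L_T+\sum L_{t-1}+L_0$ decomposition in DDPM. The conditional-independence claim $x_t\perp y_t\mid(x_0,y_0)$ is the other point worth spelling out carefully, since it is precisely what collapses the bound to a single tractable KL; it is immediate from \cref{eq:3.1}, but it is the conceptual crux and should not be left implicit. The remaining manipulations (Gaussian entropy, the $-\mathbb E[\log p]=D_{KL}+H$ identity) are routine.
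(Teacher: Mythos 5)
Your proposal is correct and follows essentially the same route as the paper's proof: cancel the shared $x$-chain, telescope the $y$-chain ratio via Bayes' rule, and convert the sole $\theta$-dependent term $-\mathbb E_{q(y_t|y_0)}[\log p_\theta(y_t|x_t)]$ into $D_{KL}(q(y_t|y_0)\|p_\theta(y_t|x_t))$ plus an entropy folded into the constant, whose final value $C=\mathbb E_{q(y_t)}\left[H(q(y_0|y_t))\right]$ matches what your bookkeeping yields after the $H(q(y_t|y_0))$ terms cancel. Your explicit statement of the conditional independence $q(y_0,y_t,x_t|x_0)=q(y_0|x_0)\,q(x_t|x_0)\,q(y_t|y_0)$ is precisely the step the paper leaves implicit when it collapses $\mathbb E_{q(y_{0:t},x_{1:t}|x_0)}$ to $\mathbb E_{q(y_0,x_t|x_0)}\left[D_{KL}(\cdot)\right]$.
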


\begin{proof}
By the factorization in Equation (6) and (11) in the main paper, we observe that
\begin{align}
\mathcal L_{VLB}&=\mathbb E_{q(y_{0:t},x_{1:t}|x_0)}\left[\log\frac{q(y_{1:t},x_{1:t}|y_0, x_0)}{p_{\theta}(y_{0:t},x_{1:t}|x_0)}\right] \\
&=\mathbb E_{q(y_{0:t},x_{1:t}|x_0)}\left[\log\frac{1}{p_{\theta}(y_t|x_t)}+\sum_{j=1}^t\log\frac{q(y_j|y_{j-1})}{q(y_{j-1}|y_j)}\right].
\end{align}
Using Bayes' rule, for any $j=1,2,\cdots,t$, we have
\begin{align}
\frac{q(y_j|y_{j-1})}{q(y_{j-1}|y_j)}=\frac{q(y_j)}{q(y_{j-1})}, \quad \frac{q(y_t)}{q(y_0)}=\frac{q(y_t|y_0)}{q(y_0|y_t)}.
\end{align}

Hence\sqq{,} it is equivalent to optimizing the KL-divergence up to a non-negative constant $C$:
\begin{align}
\mathcal L_{VLB}&=\mathbb E_{q(y_{0:t},x_{1:t}|x_0)}\left[\log\frac{q(y_t|y_0)}{p_{\theta}(y_t|x_t)}+\log\frac{1}{q(y_0|y_t)}\right] \\
&=\mathbb E_{q(y_0,x_t|x_0)}\left[D_{KL}(q(y_t|y_0)\|p_{\theta}(y_t|x_t))\right] + C,
\end{align}
where $C=\mathbb E_{q(y_t)}\left[H(q(y_0|y_t))\right]\geqslant0$ and $H$ is the entropy of a distribution.
Since $q(y_t|y_0)$ follows a Gaussian distribution, then so is optimal $p_{\theta}(y_t|x_t)$.
\end{proof}

\begin{theorem}[Optimal solution to Equation (15) in the main paper]
The optimal $p_{\theta}(y_t|x_t)$ follows a Gaussian distribution with mean $\mu_{\theta}$ being
\begin{align}
\mu_{\theta}(x_t) = \sqrt{\balpha{t}}y_0.
\end{align}
\end{theorem}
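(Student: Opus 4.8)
The plan is to combine the closed-form expression from \Cref{theorem:1} with the explicit Gaussian form of $q(y_t \mid y_0)$ and a pointwise minimization argument. From \Cref{theorem:1} we already know that, up to the additive constant $C$,
\begin{align}
\mathcal L_{VLB} = \mathbb E_{q(y_0, x_t \mid x_0)}\left[ D_{KL}\big(q(y_t \mid y_0) \,\|\, p_{\theta}(y_t \mid x_t)\big)\right] + C,
\end{align}
and that the optimal $p_{\theta}(y_t \mid x_t)$ is Gaussian. So the only thing left to pin down is its mean. First I would recall that $q(y_t \mid y_0) \sim \mathcal N\big(y_t; \sqrt{\bar\alpha_t}\, y_0, (1-\bar\alpha_t) I\big)$ from the forward-process definition. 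I would then write $p_{\theta}(y_t \mid x_t) \sim \mathcal N\big(y_t; \mu_{\theta}(x_t), \Sigma_{\theta}(x_t)\big)$ and expand the KL divergence between the two Gaussians; the term that depends on $\mu_\theta$ is proportional to $\|\sqrt{\bar\alpha_t}\, y_0 - \mu_\theta(x_t)\|^2$ (with a positive coefficient coming from $\Sigma_\theta^{-1}$).

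Next I would take the outer expectation over $q(y_0, x_t \mid x_0)$ and minimize pointwise in $x_t$. Since $\mu_\theta$ can be any measurable function of $x_t$, the minimizer of $\mathbb E_{q(y_0 \mid x_t, x_0)}\big[\|\sqrt{\bar\alpha_t}\, y_0 - \mu_\theta(x_t)\|^2\big]$ is the conditional expectation $\mu_\theta(x_t) = \sqrt{\bar\alpha_t}\,\mathbb E[y_0 \mid x_t]$. The subtlety is how to interpret $\mathbb E[y_0 \mid x_t]$: under the shared-encoder coupling used in the paper (\cref{eq:3.6}), $x_t$ and $y_t$ are built from the same noise $z_t$, so $y_0$ is the ``paired'' target, and in the idealized setting where the pairing $x_0 \mapsto y_0$ is deterministic and the coupling is as prescribed, $y_0$ is determined by $x_0$ (hence by the conditioning), giving $\mu_\theta(x_t) = \sqrt{\bar\alpha_t}\, y_0$ as claimed. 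I would state this clearly so the reader sees that $y_0$ on the right-hand side denotes the target paired with the $x_0$ that generated $x_t$.

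The main obstacle I anticipate is precisely this identification of $\mathbb E[y_0 \mid x_t]$ with $y_0$: it requires being careful about what probability space the conditioning lives in and about the (implicit) assumption that the source-to-target map is a function $y_0 = y_0(x_0)$, so that conditioning on $x_0$ fixes $y_0$. I would handle it by making the paired-data assumption explicit and noting that the loss $\mathcal L_{VLB}$ is written as an expectation over $q(y_0 \mid x_0)$ with $x_0$ (and thus its noisy version $x_t$) carrying all the information about the target. A secondary, purely mechanical point is verifying that the coefficient multiplying $\|\sqrt{\bar\alpha_t}\, y_0 - \mu_\theta(x_t)\|^2$ in the KL expansion is strictly positive so that the quadratic is genuinely minimized (not maximized) at the conditional mean; this follows from positive-definiteness of $\Sigma_\theta^{-1}$ and needs only a one-line remark. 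Everything else — expanding the Gaussian KL, pulling the minimization inside the expectation, invoking the standard ``conditional expectation minimizes mean squared error'' fact — is routine.
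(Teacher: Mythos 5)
Your proposal is correct and follows essentially the same route as the paper: both start from the closed form of \Cref{theorem:1} and use the explicit Gaussian $q(y_t\mid y_0)\sim\mathcal N(\sqrt{\bar\alpha_t}\,y_0,(1-\bar\alpha_t)I)$ to read off that the KL is minimized when $\mu_\theta(x_t)=\sqrt{\bar\alpha_t}\,y_0$. In fact your version is somewhat more careful than the paper's one-line argument, which simply asserts that the optimal mean must match $\sqrt{\bar\alpha_t}\,y_0$; your detour through the conditional expectation $\sqrt{\bar\alpha_t}\,\mathbb E[y_0\mid x_t]$ and the explicit paired-data assumption needed to collapse it to $\sqrt{\bar\alpha_t}\,y_0$ addresses a subtlety the paper leaves implicit.
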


\begin{proof}
To minimize the KL-divergence in Equation (15) in the main paper, we first notice that $q(y_t|y_0)$ follows a Gaussian distribution, \textit{i.e.},
\begin{align}
q(y_t|y_0)\sim\mathcal N(y_t;\sqrt{\balpha{t}}y_0,(1-\balpha{t})I),\quad \mu_t(y_t)=\sqrt{\balpha{t}}y_0,
\end{align}
which implies that $
p_{\theta}(y_t|x_t)\sim\mathcal N(y_t;\mu_{\theta}(x_t),\Sigma_{\theta}(x_t))$
with mean $\mu_{\theta}(x_t)=\mu_t(y_t)=\sqrt{\balpha{t}}y_0$.
\end{proof}

\revise{
\begin{lemma}
We have an upper bound of the negative log-likelihood of $-\log p_{\theta}(y_0|x_0)$ by
\begin{align}
-\log p_{\theta}(y_0|x_0)\leqslant\mathbb E_q\left[\log\frac{q(y_{1:t},x_{1:s}|y_0, x_0)}{p_{\theta}(y_{0:t},x_{1:s}|x_0)}\right],
\end{align}
where $q=q(y_{1:t},x_{1:s}|y_0, x_0).$
\end{lemma}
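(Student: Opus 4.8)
The plan is to replicate, essentially verbatim, the argument used for \Cref{lem:1}, now carrying the asymmetric index ranges dictated by \cref{eq:3.7,eq:3.x}. First I would invoke the non-negativity of the Kullback--Leibler divergence: taking the reference distribution $q = q(y_{1:t},x_{1:s}|y_0,x_0)$ and the posterior $p_{\theta}(y_{1:t},x_{1:s}|y_0,x_0)$ induced by the joint in \cref{eq:3.x}, we have $D_{KL}\bigl(q \,\|\, p_{\theta}(y_{1:t},x_{1:s}|y_0,x_0)\bigr)\geqslant 0$. Adding this non-negative quantity to $-\log p_{\theta}(y_0|x_0)$ can only increase it, so it suffices to rewrite the resulting expression into the claimed form.

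Second, I would expand the KL term as an expectation of a log-ratio and apply the definition of conditional probability, namely $p_{\theta}(y_{1:t},x_{1:s}|y_0,x_0) = p_{\theta}(y_{0:t},x_{1:s}|x_0)\,/\,p_{\theta}(y_0|x_0)$, where $p_{\theta}(y_0|x_0) = \int p_{\theta}(y_{0:t},x_{1:s}|x_0)\, dy_{1:t}\, dx_{1:s}$ is exactly the marginal defining the sampling distribution introduced before \cref{eq:3.8}. Substituting this identity splits the logarithm into $\log\bigl(q / p_{\theta}(y_{0:t},x_{1:s}|x_0)\bigr) + \log p_{\theta}(y_0|x_0)$; the latter term is deterministic inside the expectation and cancels the leading $-\log p_{\theta}(y_0|x_0)$, leaving precisely
\[
-\log p_{\theta}(y_0|x_0)\leqslant \mathbb E_{q}\!\left[\log\frac{q(y_{1:t},x_{1:s}|y_0,x_0)}{p_{\theta}(y_{0:t},x_{1:s}|x_0)}\right].
\]

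The only point requiring a moment of care is checking that the factorization $p_{\theta}(y_{0:t},x_{1:s}|x_0) = p_{\theta}(y_t|x_s)\prod_{i=1}^{s} q(x_i|x_{i-1})\prod_{j=1}^{t} q(y_{j-1}|y_j)$ in \cref{eq:3.x} is a bona fide probability density whose marginal over $(y_{1:t},x_{1:s})$ equals $p_{\theta}(y_0|x_0)$ and is strictly positive, so that the division above is legitimate. This is immediate because every factor is a (Gaussian) Markov kernel: the forward corruptions $q(x_i|x_{i-1})$, the translator $p_{\theta}(y_t|x_s)$, and the frozen DDPM reverse steps $q(y_{j-1}|y_j)$ are all everywhere positive, so positivity and the marginalization identity hold with no extra hypotheses. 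Consequently there is no genuinely new obstacle here relative to \Cref{lem:1}: decoupling the two diffusion clocks changes only the bookkeeping of indices, not the structure of the variational bound, and the proof is a line-for-line transcription with $x_{1:t}$ replaced by $x_{1:s}$ throughout.
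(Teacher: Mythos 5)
Your proposal is correct and follows essentially the same route as the paper's own proof: add the non-negative KL divergence $D_{KL}\left(q(y_{1:t},x_{1:s}|y_0,x_0)\,\|\,p_{\theta}(y_{1:t},x_{1:s}|y_0,x_0)\right)$ to $-\log p_{\theta}(y_0|x_0)$, rewrite the model posterior as $p_{\theta}(y_{0:t},x_{1:s}|x_0)/p_{\theta}(y_0|x_0)$, and cancel the log-likelihood term. Your extra remark on positivity and the marginalization identity is a harmless (and reasonable) addition that the paper leaves implicit.
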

}

\revise{
\begin{proof}
\begin{align}
&-\log p_{\theta}(y_0|x_0) \\
\leqslant& -\log p_{\theta}(y_0|x_0) + \nonumber \\
& \qquad D_{KL}\left(q(y_{1:t},x_{1:s}|y_0, x_0) \| p_{\theta}(y_{1:t},x_{1:s}|y_0, x_0)\right) \\
=& -\log p_{\theta}(y_0|x_0)+\nonumber  \\
& \qquad \mathbb E_{q(y_{1:t},x_{1:s}|y_0, x_0)}\left[\log\frac{q(y_{1:t},x_{1:s}|y_0, x_0)}{p_{\theta}(y_{1:t},x_{1:s}|y_0, x_0)}\right] \\
=& -\log p_{\theta}(y_0|x_0)+ \nonumber \\
& \qquad \mathbb E_{q(y_{1:t},x_{1:s}|y_0, x_0)}\left[\log\frac{q(y_{1:t},x_{1:s}|y_0, x_0)}{p_{\theta}(y_{0:t},x_{1:s}|x_0)/p_{\theta}(y_0|x_0)}\right] \\
=& \mathbb E_{q(y_{1:t},x_{1:s}|y_0, x_0)}\left[\log\frac{q(y_{1:t},x_{1:s}|y_0, x_0)}{p_{\theta}(y_{0:t},x_{1:s}|x_0)}\right].
\end{align}
\end{proof}
}

\revise{
\begin{theorem}[Closed-form expression]
The loss function in Equation (23) in the main paper has a closed-form representation.
The training is equivalent to optimizing a KL-divergence up to a non-negative constant, \textit{i.e.},
\begin{align}
\mathcal L_{VLB}=\mathbb E_{q(y_0,x_s|x_0)}\left[D_{KL}(q(y_t|y_0)\|p_{\theta}(y_t|x_s))\right] + C
\sqq{.}
\end{align}
\end{theorem}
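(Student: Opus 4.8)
The plan is to mirror the proof of Theorem~\ref{theorem:1} (the symmetric case) but with the forward chains now running to different terminal times $s$ and $t$. First I would start from the variational bound in \cref{eq:3.8} and substitute the factorizations \cref{eq:3.7} and \cref{eq:3.x}. Since the $x$-chain runs only up to $s$ and the $y$-chain up to $t$, the numerator factors as $\prod_{i=1}^s q(x_i|x_{i-1})\prod_{j=1}^t q(y_j|y_{j-1})$ and the denominator as $p_\theta(y_t|x_s)\prod_{i=1}^s q(x_i|x_{i-1})\prod_{j=1}^t q(y_{j-1}|y_j)$. The key observation is that the entire $x$-chain cancels between numerator and denominator exactly as before — the asymmetry in the terminal index $s$ changes nothing here because both occurrences of $\prod_{i=1}^s q(x_i|x_{i-1})$ are identical. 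So one is left with
\begin{align}
\mathcal L_{VLB}=\mathbb E_q\left[\log\frac{1}{p_\theta(y_t|x_s)}+\sum_{j=1}^t\log\frac{q(y_j|y_{j-1})}{q(y_{j-1}|y_j)}\right].
\end{align}

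Next I would handle the telescoping $y$-sum exactly as in the proof of Theorem~\ref{theorem:1}: by Bayes' rule $q(y_j|y_{j-1})/q(y_{j-1}|y_j)=q(y_j)/q(y_{j-1})$, so the product telescopes to $q(y_t)/q(y_0)$, which rewrites as $q(y_t|y_0)/q(y_0|y_t)$. Plugging this in gives
\begin{align}
\mathcal L_{VLB}=\mathbb E_q\left[\log\frac{q(y_t|y_0)}{p_\theta(y_t|x_s)}+\log\frac{1}{q(y_0|y_t)}\right],
\end{align}
and since the $x$-chain has already been integrated out, the relevant expectation collapses to one over $q(y_0,x_s|x_0)$ for the first term and over $q(y_t)$ for the second. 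Recognizing the first term as $\mathbb E_{q(y_0,x_s|x_0)}[D_{KL}(q(y_t|y_0)\|p_\theta(y_t|x_s))]$ and setting $C=\mathbb E_{q(y_t)}[H(q(y_0|y_t))]\geqslant 0$ yields \cref{eq:3.9}.

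The only genuinely new bookkeeping compared to the symmetric proof is tracking that the variable on which $p_\theta$ conditions is now $x_s$ rather than $x_t$, and confirming that nothing in the telescoping argument for the $y$-chain ever referenced the $x$-chain's terminal index; the $y$-chain is structurally untouched, so the argument goes through verbatim after the cancellation. I expect the main (minor) obstacle to be purely notational — making sure the expectation measures are reduced correctly at each step (from the full joint $q(y_{1:t},x_{1:s}|y_0,x_0)$ down to $q(y_0,x_s|x_0)$ and $q(y_t)$ respectively) so that the constant $C$ is genuinely independent of $\theta$ and manifestly non-negative, which follows because it is an expected differential entropy of a Gaussian. No new ideas beyond those in the proof of Theorem~\ref{theorem:1} are needed.
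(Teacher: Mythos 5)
Your proposal is correct and follows essentially the same route as the paper's own proof: cancel the shared $x$-chain factors, telescope the $y$-chain via Bayes' rule to obtain $q(y_t|y_0)/q(y_0|y_t)$, identify the KL term conditioned on $x_s$, and absorb the remainder into $C=\mathbb E_{q(y_t)}\left[H(q(y_0|y_t))\right]\geqslant 0$. No substantive differences.
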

}

\revise{
\begin{proof}
By the factorization in Equation (20) and (21) in the main paper, we observe that
\begin{align}
\mathcal L_{VLB}&=\mathbb E_{q(y_{0:t},x_{1:s}|x_0)}\left[\log\frac{q(y_{1:t},x_{1:s}|y_0, x_0)}{p_{\theta}(y_{0:t},x_{1:s}|x_0)}\right] \\
&=\mathbb E_{q(y_{0:t},x_{1:s}|x_0)}\left[\log\frac{1}{p_{\theta}(y_t|x_s)}+\sum_{j=1}^t\log\frac{q(y_j|y_{j-1})}{q(y_{j-1}|y_j)}\right].
\end{align}
Using Bayes' rule, for any $j=1,2,\cdots,t$, we have
\begin{align}
\frac{q(y_j|y_{j-1})}{q(y_{j-1}|y_j)}=\frac{q(y_j)}{q(y_{j-1})}, \quad \frac{q(y_t)}{q(y_0)}=\frac{q(y_t|y_0)}{q(y_0|y_t)}.
\end{align}
Hence\sqq{,} it is equivalent to optimizing the KL-divergence up to a non-negative constant $C$:
\begin{align}
\mathcal L_{VLB}&=\mathbb E_{q(y_{0:t},x_{1:s}|x_0)}\left[\log\frac{q(y_t|y_0)}{p_{\theta}(y_t|x_s)}+\log\frac{1}{q(y_0|y_t)}\right] \\
&=\mathbb E_{q(y_0,x_s|x_0)}\left[D_{KL}(q(y_t|y_0)\|p_{\theta}(y_t|x_s))\right] + C,
\end{align}
where $C=\mathbb E_{q(y_t)}\left[H(q(y_0|y_t))\right]\geqslant0$ and $H$ is the entropy of a distribution.
Since $q(y_t|y_0)$ follows a Gaussian distribution, then so is the optimal $p_{\theta}(y_t|x_s)$.
\end{proof}
}

\revise{
\begin{theorem}[Optimal solution to Equation (25) in the main paper]
The optimal $p_{\theta}(y_t|x_s)$ follows a Gaussian distribution with mean $\mu_{\theta}$ being
\begin{align}
\mu_{\theta}(x_s) = \sqrt{\balpha{t}}y_0.
\end{align}
\end{theorem}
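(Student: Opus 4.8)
The plan is to mirror exactly the argument used for \Cref{theorem:2}, since \Cref{theorem:3} has already established that the asymmetric loss reduces to the same closed-form KL-divergence objective, namely
\begin{align}
\mathcal L_{VLB}=\mathbb E_{q(y_0,x_s|x_0)}\left[D_{KL}(q(y_t|y_0)\|p_{\theta}(y_t|x_s))\right] + C,
\end{align}
where $C$ is independent of $\theta$. So the whole task is to minimize the expected KL-divergence between a fixed Gaussian $q(y_t|y_0)$ and the learnable conditional $p_{\theta}(y_t|x_s)$, and then read off the optimal mean.

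First I would recall from the forward diffusion marginal that $q(y_t|y_0)\sim\mathcal N(y_t;\sqrt{\balpha{t}}y_0,(1-\balpha{t})I)$, so its mean is $\mu_t(y_t)=\sqrt{\balpha{t}}y_0$ and its covariance is the fixed $(1-\balpha{t})I$. Next I would note that the KL-divergence between two Gaussians is minimized (indeed vanishes) precisely when the two distributions coincide; since the target Gaussian has a fixed form, the optimal $p_{\theta}(y_t|x_s)$ is itself Gaussian, with covariance $\Sigma_{\theta}(x_s)=(1-\balpha{t})I$ and mean matching $\mu_t(y_t)$. Pointwise in $(y_0,x_s)$ this forces
\begin{align}
\mu_{\theta}(x_s)=\sqrt{\balpha{t}}y_0,
\end{align}
which is exactly the claimed expression. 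The only subtlety relative to the symmetric case is that $x_s$ and $y_t$ are now produced at different timesteps $s\neq t$ from the same shared noise $z$, but this does not affect the optimality argument: the expectation is still over the joint $q(y_0,x_s|x_0)$, and the minimizer of an expectation of nonnegative KL terms is obtained by making each term zero, which pins the mean to $\sqrt{\balpha{t}}y_0$ regardless of how $x_s$ was generated.

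I do not anticipate a genuine obstacle here — the result is a direct corollary of \Cref{theorem:3} plus the standard Gaussian-KL fact. The one point worth stating carefully is \emph{realizability}: one must argue that $\sqrt{\balpha{t}}y_0$ is in principle expressible as a function of $x_s$ alone (the only available input to $f_{\theta}$), which is why the paper immediately follows this theorem with the reparameterization $\mu_{\theta}(x_s)=f_{\theta}(x_s)-\sqrt{1-\balpha{t}}z$ in \cref{eq:3.10}; since $x_s$ and $y_t$ share the noise $z$, writing $\sqrt{\balpha{t}}y_0 = y_t - \sqrt{1-\balpha{t}}z$ shows the target is attainable. Thus the proof is essentially: invoke the closed form from \Cref{theorem:3}, recall the Gaussian marginal $q(y_t|y_0)$, and conclude that the KL is minimized by the Gaussian with mean $\sqrt{\balpha{t}}y_0$.
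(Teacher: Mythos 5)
Your argument is correct and follows essentially the same route as the paper's own proof: both invoke the closed-form KL objective from the preceding theorem, observe that $q(y_t|y_0)\sim\mathcal N(y_t;\sqrt{\balpha{t}}y_0,(1-\balpha{t})I)$, and conclude that the optimal $p_{\theta}(y_t|x_s)$ must be the matching Gaussian, pinning the mean to $\sqrt{\balpha{t}}y_0$. Your added remark on realizability via the shared noise $z$ is a sensible clarification that the paper defers to the reparameterization step, but it does not change the substance of the proof.
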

}

\revise{
\begin{proof}
To minimize the KL-divergence in Equation (25) in the main paper, we first notice that $q(y_t|y_0)$ follows a Gaussian distribution, \textit{i.e.},
\begin{align}
q(y_t|y_0)\sim\mathcal N(y_t;\sqrt{\balpha{t}}y_0,(1-\balpha{t})I),\quad \mu_t(y_t)=\sqrt{\balpha{t}}y_0,
\end{align}
which implies that $
p_{\theta}(y_t|x_s)\sim\mathcal N(y_t;\mu_{\theta}(x_s),\Sigma_{\theta}(x_s))$
with mean $\mu_{\theta}(x_s)=\mu_t(y_t)=\sqrt{\balpha{t}}y_0$.
\end{proof}
}

\begin{table*}[ht]
\begin{minipage}[t]{0.48\textwidth}
\caption{
    \newrevise{
    \textbf{Quantitative comparison} between single-step and multi-step \method (TSIT-DMT-2step) upon TSIT.
    FID and SSIM are used to evaluate the image quality and content preservation, respectively.
    }
}
\vspace{-5pt}
\centering
\SetTblrInner{rowsep=2.15pt}                
\SetTblrInner{colsep=10.0pt}               
\newrevise{
\begin{tblr}{
    cell{1-6}{1-5}={halign=c,valign=m},    
    cell{1}{2,4}={c=2}{},                  
    cell{1}{1}={r=2}{},                    
    hline{1,7}={1-5}{1.0pt},               
    hline{2}={2-5}{},                      
    hline{3}={1-5}{},                      
    vline{2,4}={1-6}{},                    
}
\label{tab:metric}
Method & TSIT-\method & & TSIT-\method-2step & \\
 & \textbf{FID$\downarrow$} & \textbf{SSIM$\uparrow$} & \textbf{FID$\downarrow$} & \textbf{SSIM$\uparrow$} \\
$t=50$ & 42.00 & 0.473  & 47.77 & 0.563 \\
$t=100$ & 37.22 & 0.460 & 43.52 & 0.563 \\
$t=200$ & 36.78 & 0.446 & 45.78 & 0.536 \\
$t=400$ & 50.79 & 0.251 & 48.34 & 0.447 \\
\end{tblr}
}
\hfill
\end{minipage}
\begin{minipage}[t]{0.48\textwidth}
\caption{
    \newrevise{
    \textbf{Time cost comparison} between single-step and multi-step \method upon TSIT.
    To measure the time cost, we report the total number of training epochs of \method (\method Epoch) and of fusion UNet (Fusion Epoch), training time for 1,000 images for \method (\method Train) and fusion UNet (Fusion Train), and inference time for a single image.
    }
}
\vspace{-10pt}
\centering
\SetTblrInner{rowsep=1.05pt}               
\SetTblrInner{colsep=10.0pt}               
\newrevise{
\begin{tblr}{
    cell{1-6}{1-3}={halign=c,valign=m},    
    hline{1,2,7}={1-3}{1.0pt},             
    hline{2}={1-3}{},                      
    vline{2,3}={1-6}{},                    
}
\label{tab:time}
Method & TSIT-\method & TSIT-\method-2step \\
DMT Train & 82s & 82s \\
DMT Epoch & 60 & 60 \\
Fusion Train & No such step & 139s \\
Fusion Epoch & No such step & 100 \\
Inference & 0.48s & 0.64s \\
\end{tblr}
}
\end{minipage}
\end{table*}

\definecolor{mygreen}{RGB}{34,170,133}
\begin{table*}[ht]
\begin{minipage}[t]{0.48\textwidth}
\centering
\vspace{-8pt}
\caption{
    \newrevise{
    \textbf{Ablation study} of $(s,t)$ pair near $s=t=200$ fixing $s=200$ on CelebA-HQ dataset.
    For clearer demonstration, original \method (\textit{i.e.}, pair with $s=t$) is highlighted in \textbf{\textcolor{gray}{gray}}.
    }
}
\label{tab:ablation_s_t_sstar}
\vspace{-10pt}
\SetTblrInner{rowsep=2.15pt}                
\SetTblrInner{colsep=10.0pt}               
\newrevise{
\begin{tblr}{
    cell{1-18}{1-5}={halign=c,valign=m},    
    cell{2}{1}={r=17}{},                    
    hline{1,19}={1-5}{1.0pt},               
    hline{2}={1-5}{},                      
    vline{2}={2-18}{},                    
    vline{3}={1-18}{},                    
    cell{10}{2-5}={bg=lightgray!35},
}
$s$ & $t$ & \textbf{FID$\downarrow$} & \textbf{SSIM$\uparrow$} & \textbf{LPIPS$\downarrow$} \\
$s=200$ & $t=0$ & 53.61 & 0.347 & 0.492 \\
 & $t=50$ & 37.27 & 0.445 & 0.443 \\
 & $t=100$ & 37.31 & 0.447 & 0.443 \\
 & $t=150$ & 43.43 & 0.441 & 0.445 \\
 & $t=160$ & 43.35 & 0.438 & 0.449 \\
 & $t=170$ & 44.69 & 0.458 & 0.428 \\
 & $t=180$ & 42.82 & \bf 0.466 & \bf 0.428 \\
 & $t=190$ & 45.94 & 0.458 & 0.430 \\
 & $t=200$ & \bf 36.78 & 0.446 & 0.433 \\
 & $t=210$ & 45.15 & 0.425 & 0.433 \\
 & $t=220$ & 46.17 & 0.422 & 0.436 \\
 & $t=230$ & 46.78 & 0.418 & 0.450 \\
 & $t=240$ & 47.69 & 0.415 & 0.451 \\
 & $t=250$ & 48.32 & 0.377 & 0.464 \\
 & $t=300$ & 55.54 & 0.337 & 0.500 \\
 & $t=350$ & 64.66 & 0.267 & 0.567 \\
 & $t=400$ & 53.42 & 0.328 & 0.493 \\
\end{tblr}
}
\end{minipage}
\hfill
\begin{minipage}[t]{0.48\textwidth}
\centering
\vspace{-8pt}
\caption{
    \newrevise{
    \textbf{Ablation study} of $(s,t)$ pair near $s=t=200$ fixing $t=200$ on CelebA-HQ dataset.
    For clearer demonstration, original \method (\textit{i.e.}, pair with $s=t$) is highlighted in \textbf{\textcolor{gray}{gray}}.
    }
}
\label{tab:ablation_s_t_tstar}
\vspace{-10pt}
\SetTblrInner{rowsep=2.15pt}                
\SetTblrInner{colsep=10.0pt}               
\newrevise{
\begin{tblr}{
    cell{1-18}{1-5}={halign=c,valign=m},    
    cell{2}{2}={r=17}{},                    
    hline{1,19}={1-5}{1.0pt},               
    hline{2}={1-5}{},                      
    vline{2}={2-18}{},                    
    vline{3}={1-18}{},                    
    cell{10}{2-5}={bg=lightgray!35},
    cell{10}{1}={bg=lightgray!35},
}
$s$ & $t$ & \textbf{FID$\downarrow$} & \textbf{SSIM$\uparrow$} & \textbf{LPIPS$\downarrow$} \\
$s=0$ & $t=200$ & 44.98 & 0.448 & 0.453 \\
$s=50$ & & 42.93 & 0.451 & 0.437 \\
$s=100$ & & 44.50 & 0.443 & 0.457 \\
$s=150$ & & 43.60 & 0.446 & 0.437 \\
$s=160$ & & 42.72 & 0.445 & 0.451 \\
$s=170$ & & 44.37 & 0.443 & 0.427 \\
$s=180$ & & 44.35 & 0.447 & 0.432 \\
$s=190$ & & 45.49 & 0.438 & \bf 0.427 \\
$s=200$ & & \bf 36.78 & 0.446 & 0.433 \\
$s=210$ & & 43.99 & 0.441 & 0.442 \\
$s=220$ & & 44.94 & 0.451 & 0.432 \\
$s=230$ & & 45.67 & 0.444 & 0.439 \\
$s=240$ & & 45.68 & 0.441 & 0.441 \\
$s=250$ & & 44.11 & 0.437 & 0.433 \\
$s=300$ & & 45.36 & \bf 0.453 & 0.432 \\
$s=350$ & & 45.39 & 0.451 & 0.421 \\
$s=400$ & & 44.44 & 0.445 & 0.437 \\
\end{tblr}
}
\end{minipage}
\end{table*}

\section{\newrevise{Comparison between multi-step and asymmetric \method}}

\newrevise{
\yr{\textbf{Multi-step \method.}} To implement the multi-step \method, due to the use of the vanilla DDPM, which is only capable of inputting a 3-channel input intermediate noisy image, we train an auxiliary UNet model to fuse the $y_{t/2}$ transformed from $x_{t/2}$ together with the $y'_{t/2}$ denoised from the $y_{t}$.
}

\newrevise{
In order to train the fusion UNet processing the intermediate noisy images $y_{t/2}$ and $y'_{t/2}$, we first need to prepare the dataset.
In more details, given a paired data $(x_0,y_0)$, the preset timestep $t$, noise $z$ under standard Gaussian distribution, the pre-trained \method $G$, and the pre-trained diffusion model, we first apply the diffusion forward process onto both $x_0$ and $y_0$ with noise $z$ until timestep $t$ and $t/2$, \textit{i.e.}, we acquire the $x_{t/2}$, $x_{t}$, $y_{t/2}$, and $y_{t}$.
Next, we utilize the pre-trained \method model to obtain the transformed $G(x_{t/2})$ and $G(x_{t})$.
Then, we apply the reverse process via the pre-trained diffusion model to achieve the denoised result from $G(x_{t})$, denoted by $D(G(x_{t}))$.
Finally, repeating the process above with different paired $(x_0,y_0,z)$, we are able to acquire the dataset for the fusion UNet.
Note that similar to the training of \method, the fusion UNet are aimed to deal with \textit{noisy} images.
That is to say, empirically we need much more data samples and training epochs since CNN may easily fail on noisy data.
For the CelebA-HQ dataset with 30,000 images, we train the fusion UNet with more than 200,000 samples.
As a comparison, we train \method within only 60 epochs with 27,000 data samples.
}

\newrevise{
The comparisons between multi-step and our single-step design are reported in \cref{tab:metric} and \cref{tab:time}.
From \cref{tab:metric} we observe that under the same timestep, the FID score of multi-step \method is worse than single-step in most cases, and multi-step \method does not significantly benefit from the additional step of performing the fusion UNet.
As for SSIM score, there is indeed performance improvement to some extent compared to the vanilla DMT, which is mainly due to doubling the information during denoising process and taking advantage of the fusion UNet.
Considering the dramatic additional time cost discussed below, we regard that the vanilla single-step DMT is an adequate solution to the I2I task. 
It is noteworthy that for multi-step \method, both the training (including \method and Fusion training) and inference costs increase significantly, as shown in \cref{tab:time}, which confirms the superiority of the single-step \method proposed in the paper.
}

\newrevise{
\yr{\textbf{Asymmetric \method.}}
As for asymmetric \method, in addition to the theoretical analysis in the main paper, we also conducted a comprehensive ablation study focusing on the performance at various timestep pairs $(s,t)$ near $s=t$.
As shown in \cref{tab:ablation_s_t_sstar,tab:ablation_s_t_tstar}, our proposed strategy (pair with $s=t$) is capable of achieving on-par or even superior performance across various $(s,t)$ alternatives.
}

\section{More results}

This part shows more qualitative results and compares our \method with existing I2I approaches, including Pix2Pix (GAN-based)~\cite{isola2017image}, TSIT (GAN-based)~\cite{jiang2020tsit} and Palette (DDPM-based)~\cite{saharia2021palette}.
We perform evaluation on the tasks of stylization (\cref{fig:portrait_qmupd0-1-0}), image colorization (\cref{fig:afhq}), segmentation to image (\cref{fig:celeba}), and sketch to image (\cref{fig:edges2handbags}), using our handcrafted Anime dataset, AFHQ~\cite{choi2020stargan}, CelebA-HQ~\cite{karras2018progressive}, and Edges2handbags~\cite{zhu2016generative,xie15hed}, respectively.
Our method surpasses the other three competitors with higher fidelity (\textit{e.g.}, clearer contours\newrevise{, less artifacts} and more realistic colors \newrevise{as highlighted}), suggesting that our \method manages to bridge the content information provided by the input \revise{condition} and the domain knowledge contained in the pre-trained DDPM.

\begin{figure*}[!ht]
\centering
\includegraphics[width=1.0\textwidth]{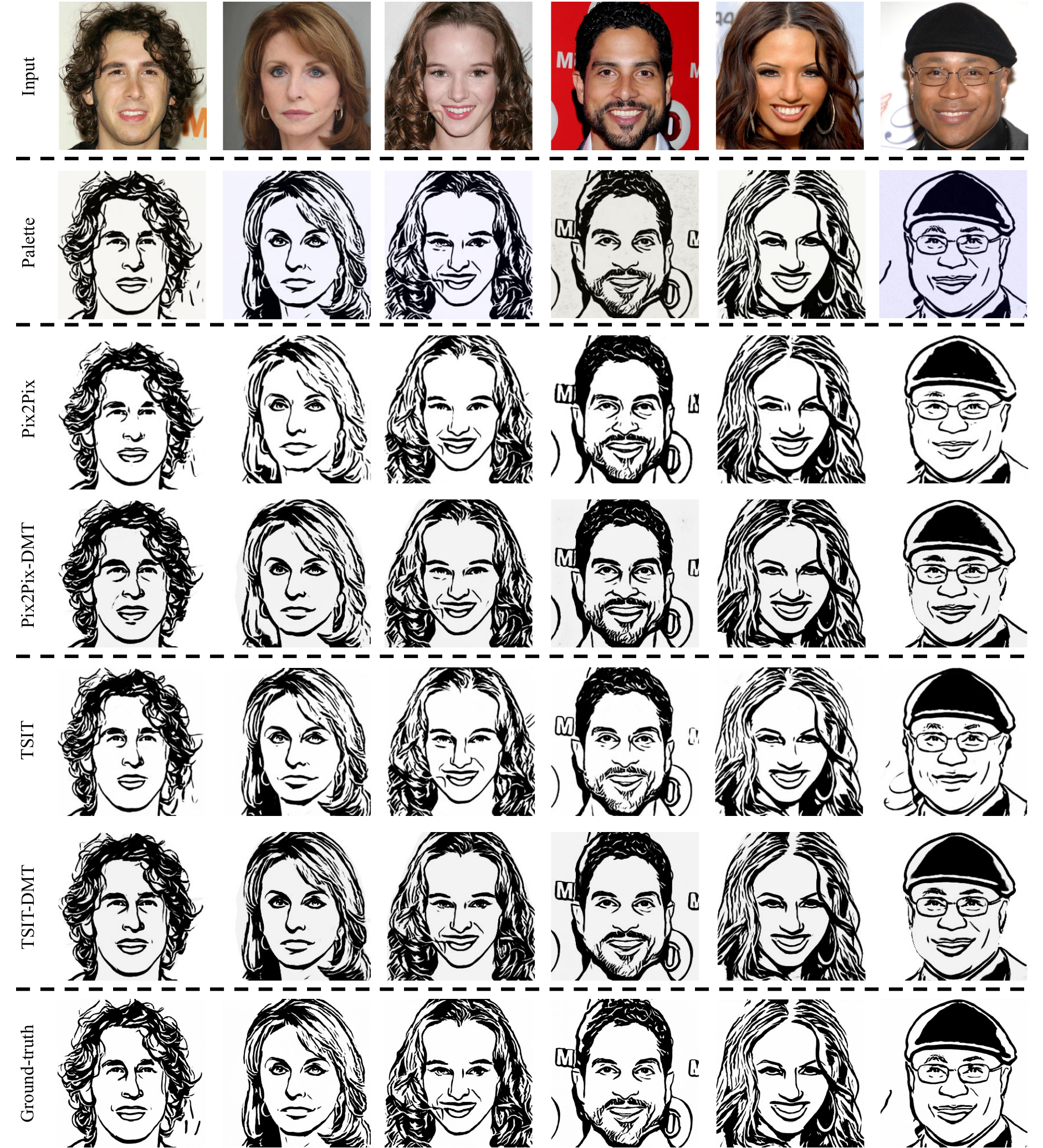}
\vspace{-15pt}
\caption{
    \textbf{Qualitative comparison} when translating human face images to portraits, using our handcrafted Portrait dataset.
}
\label{fig:portrait_qmupd0-1-0}
\vspace{-5pt}
\end{figure*}
\begin{figure*}[!ht]
\centering
\includegraphics[width=1.0\textwidth]{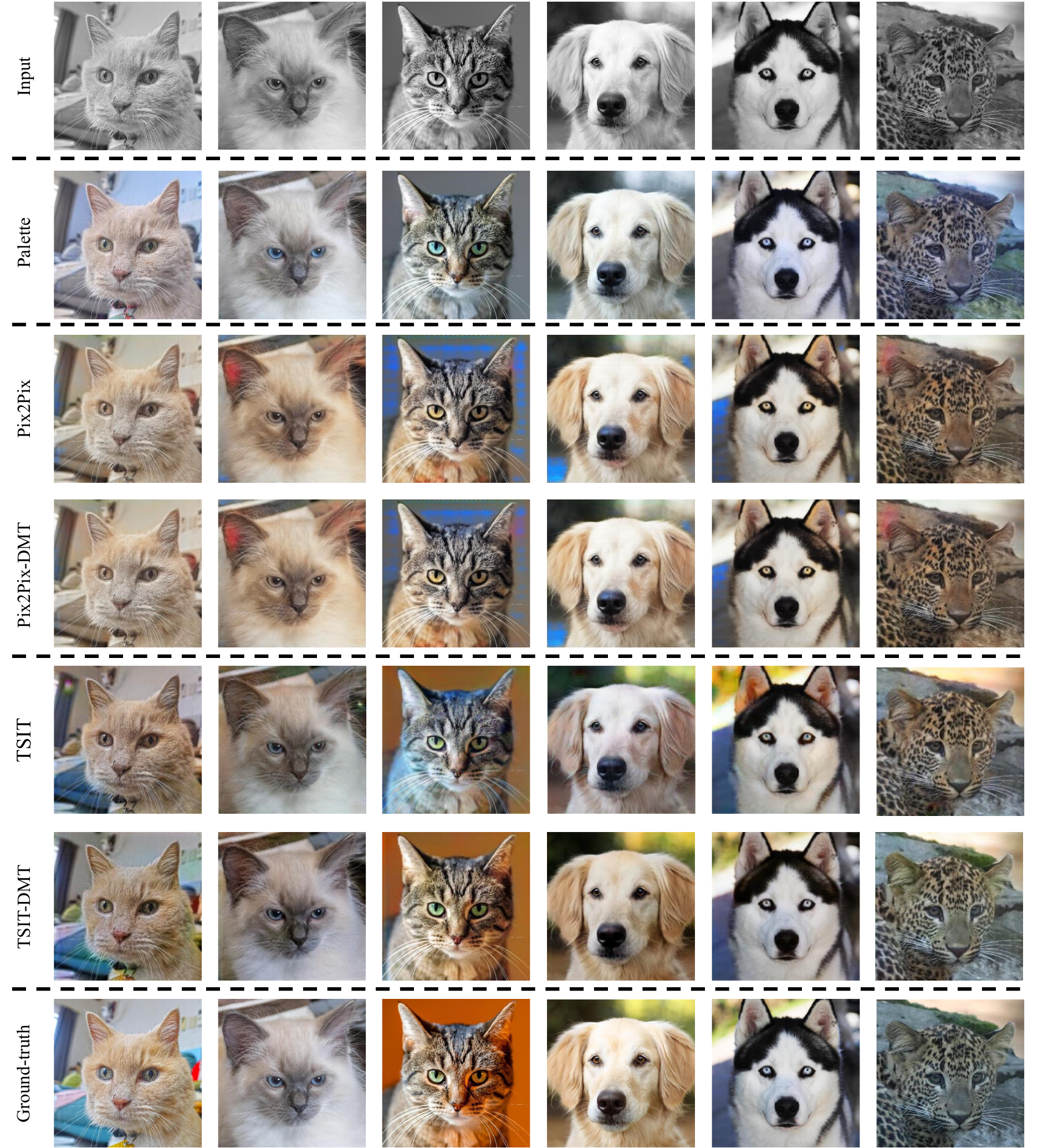}
\vspace{-15pt}
\caption{
    \textbf{Qualitative comparison} when translating greyscale images to colorized ones, using AFHQ dataset~\cite{choi2020stargan}.
}
\label{fig:afhq}
\vspace{-5pt}
\end{figure*}
\begin{figure*}[!ht]
\centering
\includegraphics[width=1.0\textwidth]{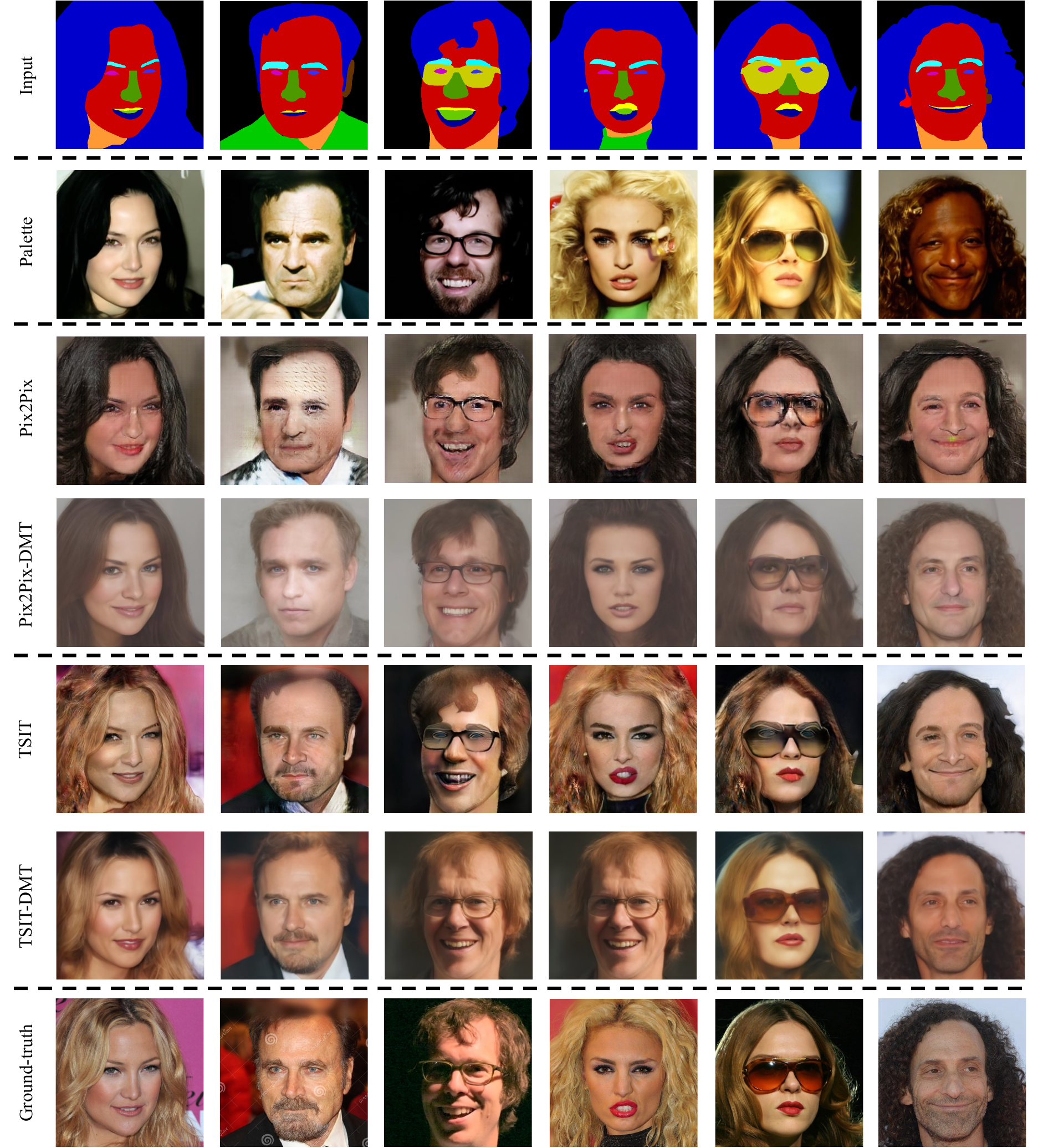}
\vspace{-15pt}
\caption{
    \textbf{Qualitative comparison} when translating segmentation maps to images, using CelebA-HQ dataset~\cite{karras2018progressive}.
}
\label{fig:celeba}
\vspace{-5pt}
\end{figure*}
\begin{figure*}[!ht]
\centering
\includegraphics[width=1.0\textwidth]{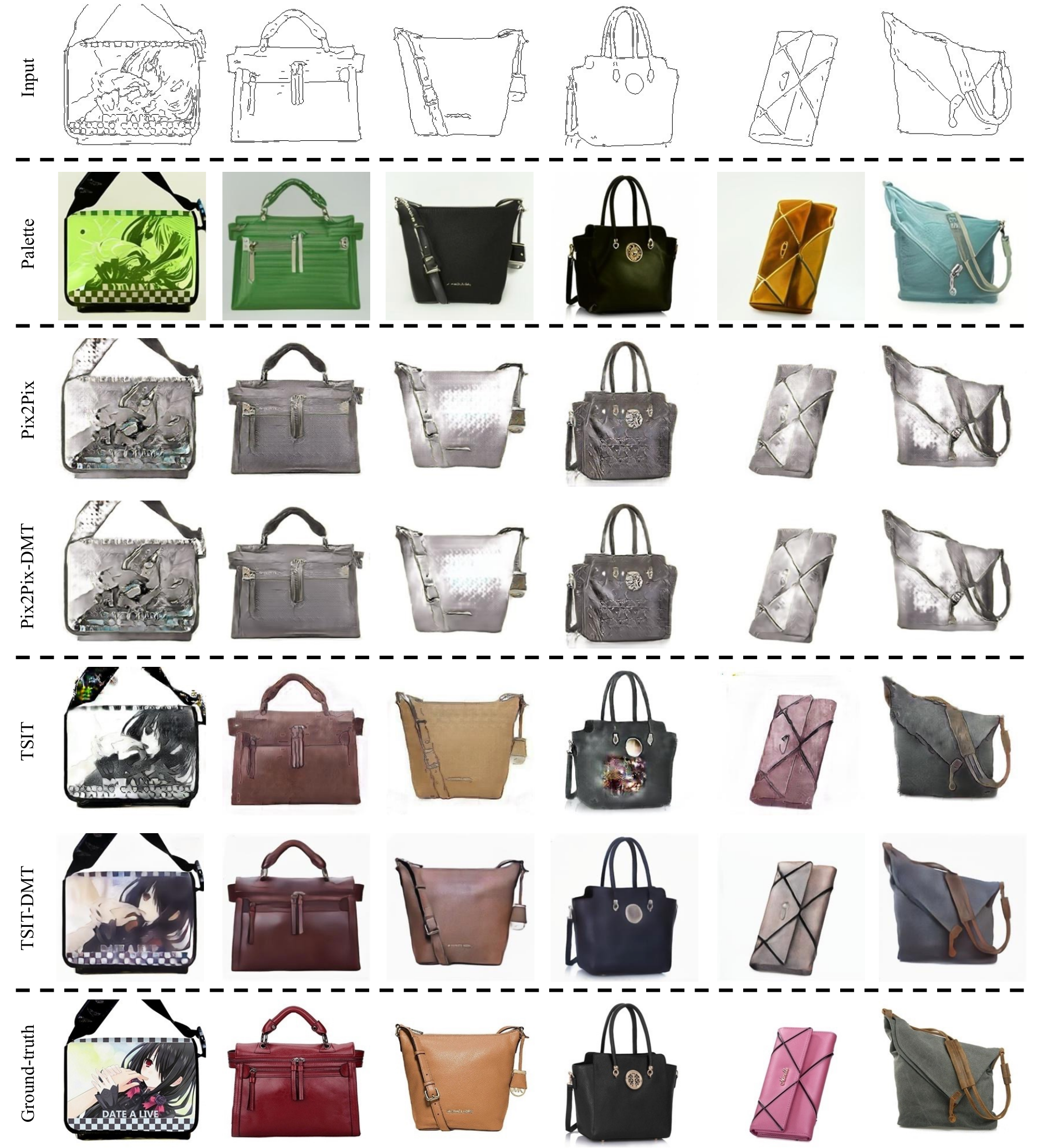}
\vspace{-15pt}
\caption{
    \textbf{Qualitative comparison} when translating sketches to images, using Edges2handbags dataset~\cite{zhu2016generative,xie15hed}.
}
\label{fig:edges2handbags}
\vspace{-5pt}
\end{figure*}

\end{document}